\documentclass[11pt,letterpaper]{article}
\PassOptionsToPackage{table}{xcolor}
\usepackage[margin=1in]{geometry}
\usepackage{newtxtext}
\usepackage{graphicx,booktabs,amsmath,amssymb,caption,subcaption,etoolbox}
\usepackage[numbers,sort&compress]{natbib}
\usepackage[T1]{fontenc}
\usepackage{microtype,xurl,amsfonts,mathtools,bm,enumitem,amsthm,aliascnt}
\usepackage{multirow,nicematrix,needspace,placeins}
\usepackage{algorithm,algorithmic,titletoc}
\usepackage{tikz}
\usetikzlibrary{positioning,arrows.meta,fit,calc}
\usepackage{xr-hyper}
\definecolor{cvprblue}{rgb}{0.21,0.49,0.74}
\usepackage[breaklinks,colorlinks,allcolors=cvprblue]{hyperref}
\usepackage[capitalize]{cleveref}
\definecolor{lightCyan}{RGB}{230,243,250}
\definecolor{deltaGray}{gray}{0.40}
\definecolor{groupShade}{gray}{0.94}
\newcommand{\best}[1]{\textbf{#1}}
\newcommand{\second}[1]{\underline{#1}}
\newcommand{\dt}[1]{\textcolor{deltaGray}{\scriptsize #1}}
\newcommand{\sg}{\operatorname{sg}}

\hypersetup{pdfauthor={Ruoyu Wang, Xinyi Ye, Yunshuo Shi, Liangyu Yuan, Beier Zhu, Chi Zhang},pdftitle={Variational Optimal Control for Diffusion Guidance}}

\newcommand{\sharedthm}[3]{%
  \newaliascnt{#1}{theorem}%
  \newtheorem{#1}[#1]{#2}%
  \aliascntresetthe{#1}%
  \crefname{#1}{#2}{#3}%
  \Crefname{#1}{#2}{#3}%
}
\newtheoremstyle{vocplain}{0.7\baselineskip}{0.7\baselineskip}{\itshape}{0pt}{\bfseries}{.}{0.5em}{}
\newtheoremstyle{vocdefn}{0.7\baselineskip}{0.7\baselineskip}{\normalfont}{0pt}{\bfseries}{.}{0.5em}{}
\newtheoremstyle{vocremark}{0.7\baselineskip}{0.7\baselineskip}{\normalfont}{0pt}{\bfseries}{.}{0.5em}{}
\theoremstyle{vocplain}
\newtheorem{theorem}{Theorem}[section]
\crefname{theorem}{Theorem}{Theorems}
\Crefname{theorem}{Theorem}{Theorems}
\sharedthm{proposition}{Proposition}{Propositions}
\sharedthm{lemma}{Lemma}{Lemmas}
\sharedthm{corollary}{Corollary}{Corollaries}
\theoremstyle{vocdefn}
\sharedthm{definition}{Definition}{Definitions}
\sharedthm{assumption}{Assumption}{Assumptions}
\sharedthm{approximation}{Approximation}{Approximations}
\theoremstyle{vocremark}
\sharedthm{remark}{Remark}{Remarks}

\crefname{appendix}{Appendix}{Appendices}
\Crefname{appendix}{Appendix}{Appendices}
\crefname{equation}{Eq.}{Eqs.}
\Crefname{equation}{Eq.}{Eqs.}
\crefname{section}{Sec.}{Secs.}
\Crefname{section}{Section}{Sections}
\allowdisplaybreaks[1]

\newcommand{\ours}{\textnormal{VOG}}

\input{frontmatter-style}

\AtBeginDvi{}

\title{Variational Optimal Control for Diffusion Guidance}
\newcommand{\affmark}[1]{\raisebox{0.6ex}{\scriptsize\normalfont #1}}
\newcommand{\equalcontrib}{\affmark{*}}
\newcommand{\corresponding}{\affmark{\textdagger}}

\author{%
{\normalfont Ruoyu Wang}\affmark{1,*} \quad
{\normalfont Xinyi Ye}\affmark{1,*} \quad
{\normalfont Yunshuo Shi}\affmark{1,2,*}
\\
Liangyu Yuan\affmark{3} \quad
Beier Zhu\affmark{4} \quad
Chi Zhang\affmark{1,\textdagger}
\\[0.4em]
{\small
\affmark{1}AGI Lab, Westlake University \quad
\affmark{2}Shandong University
}
\\
{\small
\affmark{3}Shanghai Jiao Tong University \quad
\affmark{4}University of Science and Technology of China
}
\\[0.4em]
{\small
\texttt{wangruoyu71@westlake.edu.cn}
}
\\[0.3em]
{\small
\equalcontrib Equal contribution \quad
\corresponding Corresponding author
}
}

\date{\small\textbf{Code:} \url{https://github.com/RuoyuWang-2077/VOG}}
\begin{document}
\raggedbottom
\makeagilabfrontmatter{\begin{abstract}
We present a variational control framework that connects sampling-time guidance and reward-based model adaptation through a shared Jacobian pullback. Starting from terminal reward maximization with a path-space KL penalty, the Gibbs variational principle and Doob transform yield a correction to the generative dynamics. A local denoising approximation exposes the pullback that transports output preferences to the controlled variable. This construction gives Variational Optimal Guidance (VOG) in state space and extends to Distilled Variational Optimal Control (DVOC) by amortizing locally improved endpoints into model parameters. The resulting controls share a local variational rule: VOG acts through the denoising Jacobian, while DVOC realizes the parameter pullback by endpoint distillation. Under a linear MMSE model, VOG's exact correction lies in the data subspace. On ImageNet, VOG reduces DiT-XL/2 FID from 2.27 to 2.04 at 250 sampling steps and from 7.58 to 4.44 at 25 steps. After 20 updates on Z-Image-Turbo, DVOC improves HPSv3++ and four held-out evaluator scores over the untrained model, with the highest corresponding means among the trained methods compared. These results connect inference-time control and reward adaptation through a common geometric principle.
\end{abstract}
}
\section{Introduction}
\label{sec:intro}
Diffusion models have become powerful image generators by learning to transform noise into samples from a data distribution~\cite{Ho2020DDPM,Song2021SDE,Peebles2023DiT,Karras2024EDM2}. In many applications, however, faithfully reproducing that distribution is only a starting point: users seek images with particular content, stronger prompt alignment, or preferred visual attributes. These goals call for adapting a pretrained generator to an objective that may differ from its training criterion. The challenge is to improve the desired objective while retaining the quality and diversity supported by the learned prior.

Sampling-time guidance and reward-based fine-tuning offer two ways to pursue this goal. Classifier-free guidance (CFG)~\cite{ho2022classifierfreediffusionguidance} strengthens conditioning by extrapolating between conditional and unconditional predictions, while gradient-based methods steer sampling using an objective evaluated on a predicted clean image~\cite{chung2023diffusion,yu2023freedom,bansal2024universal}. Fine-tuning instead updates model parameters through reward backpropagation or reinforcement learning, incorporating preferences into future generations~\cite{Xu2023ImageReward,Clark2023DirectlyFinetuning,Prabhudesai2023AlignProp,Black2023DDPO}. Stronger intervention is not always beneficial: increasing the CFG scale can reduce distributional coverage~\cite{chidambaram2024guidance,sadat2024eliminating}, and an external objective can favor directions that depart from the data structure~\cite{guo2024gradient}.

A central difficulty is that preferences are defined on a completed image, whereas interventions act on noisy intermediate states or model parameters. Translating a desired output change into an effective intervention requires accounting for how the generator responds. We therefore ask how a terminal objective and a cost of intervention should determine the control, and whether a common local principle can govern both sampling-time guidance and parameter adaptation.

We address these questions through \emph{variational optimal control} (VOC). For sampling, the pretrained conditional process provides a reference, and control maximizes terminal reward while penalizing deviation from that reference through a path-space KL divergence. This objective makes the trade-off between preference and prior explicit. Its optimal solution reweights future denoising trajectories by terminal reward and determines a corresponding correction to the dynamics. A local Gaussian approximation and a denoised reward proxy then yield \emph{Variational Optimal Guidance} (VOG): the output gradient is transported to the current state through the denoising Jacobian. This Jacobian accounts for which changes in the noisy state affect the predicted clean sample. Under a linear subspace model with an optimal denoiser, the exact correction lies in the data subspace; the practical sampler approximates this pullback for neural denoisers.

The local control principle also suggests how to incorporate terminal improvements into the generator. We introduce \emph{Distilled Variational Optimal Control} (DVOC), which first refines a sampled terminal latent using reward gradients within a bounded neighborhood, then trains the model to reproduce the resulting target. Differentiating the distillation loss transports the target displacement through a parameter Jacobian, realizing the parameter-space counterpart of the local VOG control. Selected denoising steps carry this parameter gradient, and the adapted model uses its ordinary sampler at inference. VOG and DVOC thus share a concrete mechanism: a preferred output change is mapped through the sensitivity of the controlled variable, acting on the current state in VOG and on model parameters in DVOC (\Cref{fig:vog_overview}).

Our contributions are:
\begin{itemize}[leftmargin=*,nosep]
    \item \textbf{A variational account of diffusion guidance.} We derive VOG from terminal reward maximization with a path-space KL penalty and characterize the subspace support of its exact correction under a linear MMSE model.
    \item \textbf{A connection between guidance and model adaptation.} We identify a shared local control principle and derive DVOC's endpoint distillation objective as its parameter-space realization.
    \item \textbf{Evaluation across guidance and reward adaptation.} VOG improves FID in all twelve ImageNet model--guidance pairings and reduces 25-step DiT-XL/2 FID from 7.58 to 4.44. After 20 updates, DVOC improves HPSv3++ and four held-out evaluator scores over pretrained Z-Image-Turbo, with the highest corresponding means among the trained methods compared.
\end{itemize}

\section{Related Work}
\label{sec:related}
\paragraph{Sampling-time guidance.}
Classifier guidance~\cite{Dhariwal2021DiffusionBeatGANs} steers sampling with a learned classifier gradient, while CFG~\cite{ho2022classifierfreediffusionguidance} combines conditional and unconditional predictions. Subsequent methods adjust guidance direction, strength, or schedule~\cite{chung2024cfg++,sadat2024cads,sadat2024eliminating,kynkaanniemi2024applying,Karras2024autoguidance,xia2025rectified}. Manifold-aware guidance uses denoiser geometry~\cite{guo2024gradient,he2024manifold}. DPS, FreeDoM, and Universal Guidance propagate objectives through a clean prediction~\cite{chung2023diffusion,yu2023freedom,bansal2024universal}. We derive state corrections from a terminal control objective, with their geometry determined by the denoising Jacobian.

\paragraph{Reward-based model adaptation.}
DDPO and DPOK use policy gradients~\cite{Black2023DDPO,Fan2023DPOK}, while Diffusion-DPO learns from preference pairs~\cite{Wallace2024DiffusionDPO}. Flow-GRPO~\cite{Liu2025FlowGRPO} adapts group-relative policy optimization to flow matching via stochastic rollouts. DiffusionNFT~\cite{Zheng2025DiffusionNFT} contrasts positive and negative generations through forward-process flow matching. Advantage Weighted Matching (AWM)~\cite{Xue2026AWM} reweights score/flow-matching pretraining losses by sample advantages. ReFL, DRaFT, AlignProp, and DRTune backpropagate differentiable rewards~\cite{Xu2023ImageReward,Clark2023DirectlyFinetuning,Prabhudesai2023AlignProp,wu2024drtune}. FlowBP~\cite{FlowBP2026} uses a lightweight backward trajectory, and DiffusionOPSD~\cite{zhou2026diffusionopsd} distills reward-improved clean outputs. We formulate reward adaptation as amortized control through the same local variational principle and Jacobian pullback as sampling-time guidance.

\paragraph{Variational control of generative models.}
Stochastic optimal control balances terminal reward against deviation from a reference process. Adjoint Matching~\cite{DomingoEnrich2025AdjointMatching} expresses reward adaptation as control regression. DEFT and h-Edit~\cite{denker2024deft,nguyen2025hedit} use Doob transforms for conditioning and editing. Within our variational framework, terminal preferences yield state and parameter corrections through Jacobian pullbacks. VOG and DVOC realize these two forms of control as trajectory guidance and its amortization in the generator.

\section{Variational Optimal Control and Model Geometry}
\begin{figure*}[t]
\centering
\includegraphics[width=\textwidth]{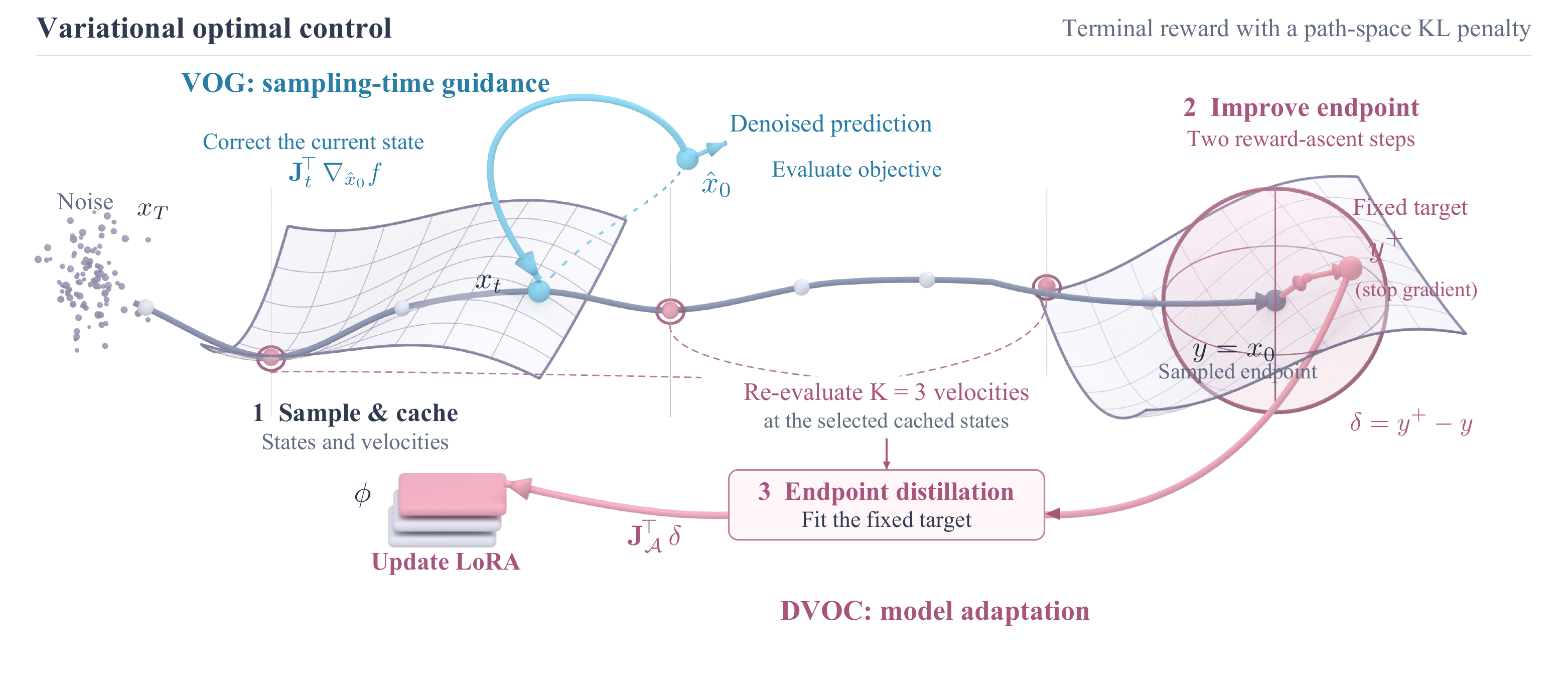}
\caption{\textbf{Variational control for sampling-time guidance and model adaptation.} The gray curve illustrates generation from noise to the sampled endpoint. \textbf{VOG} (blue) evaluates the objective at the denoised prediction $\hat x_0$ and maps its gradient through $\mathbf J_t$ to correct the current sampling state. \textbf{DVOC} (pink) caches the trajectory, refines the actual endpoint $y=x_0$ within a local neighborhood, and distills the fixed target $y^+$ into LoRA parameters. The three ringed states supply the selected velocity evaluations ($K=3$) used for endpoint distillation. Target refinement uses two projected reward-ascent steps ($J=2$).}
\label{fig:vog_overview}
\end{figure*}

\paragraph{From terminal preferences to control.}
VOG and DVOC translate terminal preferences into corrections through the sensitivity of the controlled variable. VOG differentiates the denoised prediction with respect to the current state. DVOC differentiates the generated endpoint with respect to model parameters. We first derive the state-space control from the reference process, then develop parameter adaptation from the same local variational principle in \Cref{sec:training}.

To analyze VOG's geometry, we adopt a low-dimensional linear-subspace model~\cite{pope2021intrinstic,guo2024gradient}. The model gives an explicit characterization of the denoising Jacobian.
\begin{assumption}[Linear Latent Subspace]
\label{assump:linear_subspace}
We assume the clean data $x^* \in \mathbb{R}^D$ resides strictly within a low-dimensional linear subspace $\mathcal{S}$ of intrinsic dimension $d \ll D$. 
Specifically, the data is generated as $x^* = Uz$, where $U \in \mathbb{R}^{D \times d}$ is an orthogonal basis matrix satisfying $U^\top U = I_d$, and the latent variable $z$ follows a standard Gaussian prior $z \sim \mathcal{N}(0, I_d)$. 
Consequently, the data support is defined as $\mathcal{S} \triangleq \operatorname{span}(U)$.
\end{assumption}

Let $P_{\parallel}=UU^\top$ and $P_{\perp}=I_D-UU^\top$. Every vector decomposes into components parallel and normal to this subspace.

\subsection{How CFG Shifts the Mean and Reduces Coverage}
\label{subsec:cfg_pitfalls}
\begin{figure*}[t]
    \centering
    \includegraphics[width=1.0\linewidth]{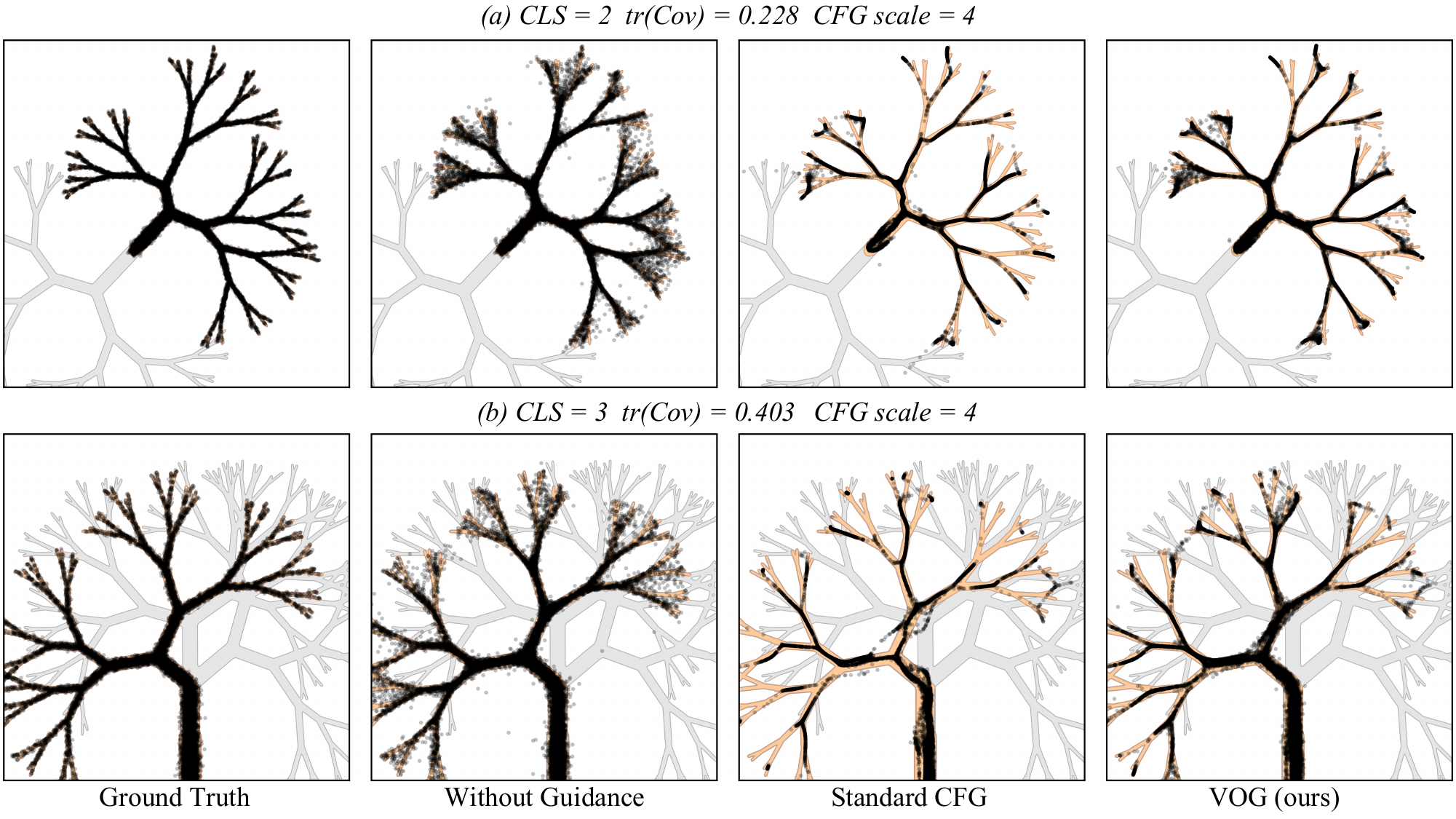}
    \caption{
Toy class-conditional generation on fractal-like 2D branching-tree distributions \cite{Karras2024autoguidance} under two increasingly challenging settings: a two-class benchmark (CLS$=2$) and a three-class variant with shared structure (CLS$=3$), which induces stronger class overlap.
We quantify generation difficulty using $\mathrm{tr}(\mathrm{Cov})$ of the class-conditional ground-truth mixture, where larger values indicate greater difficulty.
\textbf{(a)} Ground truth samples from the target class distribution.
\textbf{(b)} No guidance: conditional sampling with a small diffusion model produces outliers and off-class leakage.
\textbf{(c)} CFG ($\omega=4$) suppresses outliers but shows mode collapse, reducing branch coverage; the collapse is more severe as difficulty increases from $\mathrm{tr}(\mathrm{Cov})=0.228$ (CLS$=2$) to $\mathrm{tr}(\mathrm{Cov})=0.403$ (CLS$=3$).
\textbf{(d)} $\ours$ (ours) preserves diversity and branch coverage while achieving strong conditional alignment. Detailed quantitative results are provided in \Cref{tab:toy-merged}.
}
    \label{fig:toy_example}
    \vspace{-2mm}
\end{figure*}

CFG extrapolates between conditional and unconditional predictions. On the branching-tree example in \Cref{fig:toy_example}, stronger guidance suppresses outliers but reduces branch coverage. A local probability-flow analysis explains how the prediction contrast can shift the generated distribution. Its assumptions and derivation appear in \Cref{sec:cfg_span_shift}.

\Needspace{5\baselineskip}
\noindent\textbf{Guidance-induced mean shift.}
A first-order perturbation analysis of the VP probability flow ODE gives
\begin{align}
    \mathbb{E}_{q_\omega}[x_0] &\approx \mu_{\text{global}} + \omega\,\Delta_{\text{eff}},
    \nonumber\\*
    \Delta_{\text{eff}}
    &\triangleq
    \mathbb{E}\!\left[\int_0^T \bm{\Psi}(\tau)\big(\mu_c-\bar\mu(x_\tau)\big)\,d\tau\right],
    \label{eq:cfg_final_shift}
\end{align}
where the expectation is over samples $x_0$ generated by the $\omega$-guided sampler, \(\mu_{\text{global}}=\mathbb{E}_{q}[x_0]\) is the unconditional data mean, \(\mu_c\) the target class mean, \(\bar\mu(x_\tau)=\sum_y p(y\mid x_\tau)\mu_y\) the posterior-weighted dynamic mean, and \(\bm{\Psi}(\tau)\) the probability flow kernel of~\Cref{app:flow_kernel}. The integrand is thus the class-uncertainty gap, transported to $t=0$ by the flow. The trajectory-dependent term $\Delta_{\mathrm{eff}}$ also varies with $\omega$.

Writing $\omega=1+w$ separates ordinary conditional prediction from further extrapolation. This motivates anchoring the control in \Cref{sec:voc} at the conditional policy. 

The analysis assumes Bayes-optimal branch mixing and locally affine class denoisers with shared contraction (\Cref{ass:mmse} and \Cref{ass:affine}). 

\noindent\textbf{Geometric interpretation.}
The transported class contrast links guidance to changes in distributional coverage. An exact subspace denoiser keeps the guided mean in the subspace. An imperfect denoiser can introduce cross-subspace coupling. Whether it yields a normal mean component also depends on the class contrast and cancellation along the trajectory (\Cref{rem:offspan_mechanism}).

\subsection{Variational Optimal Control (VOC)}
\label{sec:voc}

We formulate sampling-time guidance as variational optimal control: maximize a terminal objective while penalizing deviation from a conditional reference process. \Cref{fig:vog_overview} relates the resulting state correction to DVOC's parameter-space pullback.

\noindent\textbf{Current-state conditional formulation.}
We parameterize denoising time as \(t\in[0,T]\) (\(t{=}T\): noise, \(t{=}0\): clean sample). At time \(t\), the state \(X_t=x\) is already realized, so guidance can only modify the future continuation from \(t\) to \(0\). Let \(P_{b,t:0}(\cdot\mid X_t{=}x)\) be the future path measure of a fixed baseline policy \(p_b\), and let the terminal objective \(f(X_0,c)\) say which clean samples are preferred. We seek a controlled path measure \(Q_{t:0}(\cdot\mid X_t{=}x)\) solving
\begin{equation}
\max_Q\left\{\lambda\,\mathbb E_Q[f(X_0,c)]-D_{\mathrm{KL}}(Q\|P_b)\right\},
\label{eq:conditional_voc_continuous}
\end{equation}
where $Q$ and $P_b$ abbreviate the future path measures conditioned on $X_t=x$. 

\noindent\textbf{Conditional reference.}
For the stochastic VOG formulation, $p_b$ is the pretrained conditional policy predicting $\epsilon_c=\epsilon_\theta(x_t,t,c)$. Zero control means ordinary conditional sampling. We use the terminal log-ratio $f_{\mathrm{cfg}}(X_0,c)=\log p_\theta(X_0\mid c)-\log p_\theta(X_0)$, with control strength $w=\omega-1$. The reference already carries the conditioning, so extrapolation enters through the terminal objective. For reward adaptation, we instantiate the endpoint objective with a learned evaluator and constrain local improvement within a trust region (\Cref{sec:training}).

\noindent\textbf{Optimal solution via exponential tilting.}
Define the desirability \(h_t(x)\triangleq\mathbb E_{P_{b,t:0}}[\exp(\lambda f(X_0,c))\mid X_t{=}x]\) and its log \(\ell_t(x)\triangleq\log h_t(x)\).
By the Gibbs variational identity, the optimizer of Eq.~\eqref{eq:conditional_voc_continuous} is the exponentially tilted path measure (proof in~\Cref{app:voc_to_jacobian_reg}):
\begin{equation}
\frac{dQ_{t:0}^{*}}{dP_{b,t:0}}
=
\frac{\exp(\lambda f(X_0,c))}{h_t(x)}.
\label{eq:continuous_conditional_tilt}
\end{equation}
Using an increasing clock along the denoising direction, the Doob $h$-transform shifts the reference drift by $\Delta b^*(x,t)=a_t(x)\nabla_x\ell_t(x)$, where $a_t$ is the diffusion covariance. This covariance is preserved by the transform.

\noindent\textbf{Practical discretization.}
For a single denoising step of size \(\delta>0\) (advancing from \(t\) to \(t-\delta\)), we use the local Gaussian approximation \(X_{t-\delta}\mid X_t{=}x \approx \mathcal N(\mu_t(x),\Sigma_t)\) with \(\mu_t(x)=x+b_\theta(x,t)\delta\) and \(\Sigma_t=a_t(x)\delta\).
Applying the short-time Doob tilt and linearizing the log-desirability yields a Gaussian with the same covariance and a shifted mean,
\begin{equation}
\mu_t^{\ours}(x) = \mu_t(x) + \Delta\mu_t,
\qquad
\Delta\mu_t = \Sigma_t\nabla_x\ell_t(x).
\label{eq:local_mean_shift}
\end{equation}
Since the exact \(\ell_t\) requires integrating over all future continuations, we approximate it by the denoised reward proxy and pull the gradient back through the denoising Jacobian \(\mathbf J_t=\partial\hat x_0/\partial x\), giving \(\nabla_x\ell_t(x) \approx \lambda\,\mathbf J_t(x)^\top\nabla_{\hat x_0}f(\hat x_0,c)\).
For isotropic step covariance $\Sigma_t=s_t^2 I$ and $\lambda>0$, mapping this mean-space correction to the noise-prediction parameterization (\(\mu_t = A_t x - C_t\epsilon\), \(C_t>0\)) and absorbing all sampler-dependent constants into a positive schedule \(\zeta(t)\) gives our main result:

\begin{proposition}[Variational Optimal Guidance]
\label{thm:vog}
Under the current-state conditional VOC of Eq.~\eqref{eq:conditional_voc_continuous}, the Doob \(h\)-transform preserves the diffusion covariance and shifts the drift by \(a_t\nabla_x\ell_t\). With isotropic step covariance, the short-time Gaussian approximation, and the denoised reward proxy, the first-order local approximation to the optimal noise-space correction is
\begin{equation}
\epsilon_{\mathrm{vog}}(x,t,c)
=
\epsilon_c(x,t)
-
\zeta(t)\,
\mathbf J_t^\top
\nabla_{\hat x_0}f(\hat x_0,c),
\label{eq:final_form}
\end{equation}
where $\epsilon_c$ is the conditional reference prediction, $\mathbf J_t=\partial\hat x_0/\partial x$, and $\zeta(t)>0$ is a sampler-dependent schedule. The derivation is in \Cref{app:voc_to_jacobian_reg}.
\end{proposition}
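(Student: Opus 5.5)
The plan is to chain four standard steps: the Gibbs variational identity, the Doob $h$-transform, a short-time Gaussian tilt, and a reparameterization into noise space. Each step supplies one factor in \cref{eq:final_form}.

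\textbf{Step 1 (Gibbs).} For any $Q\ll P_b$ conditioned on $X_t=x$, I would write
\[
\lambda\,\mathbb E_Q[f]-D_{\mathrm{KL}}(Q\|P_b)=\log h_t(x)-D_{\mathrm{KL}}(Q\|Q^*),
\]
with $Q^*$ defined by \cref{eq:continuous_conditional_tilt}. This identity follows by expanding $\log(dQ/dP_b)=\log(dQ/dQ^*)+\lambda f-\log h_t$. The objective is therefore maximized uniquely at $Q^*$, and the optimal value is $\ell_t(x)$.

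\textbf{Step 2 (Doob).} The tilt factorizes through the Markov property, so under $Q^*$ the process remains Markov with transition densities $p_b(y\mid x)\,h_{t-\delta}(y)/h_t(x)$. Here $h$ is space--time harmonic for the reference generator, since it is a conditional expectation. Applying Girsanov (or generator conjugation $\mathcal L^h g=h^{-1}\mathcal L(hg)-h^{-1}g\,\mathcal L h$) shows two things: the second-order coefficient $a_t$ is unchanged, and the drift gains $a_t\nabla\log h_t=a_t\nabla\ell_t$.

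\textbf{Step 3 (one step).} In discrete time I would multiply $\mathcal N(\mu_t,\Sigma_t)$ by $h_{t-\delta}(y)\approx h_{t-\delta}(\mu_t)\exp\bigl(\nabla\ell\cdot(y-\mu_t)\bigr)$, using a first-order expansion of $\ell$ around $\mu_t$. Completing the square gives a Gaussian with covariance $\Sigma_t$ and mean $\mu_t+\Sigma_t\nabla\ell$, which is \cref{eq:local_mean_shift}. Evaluating the gradient at $x$ instead of $\mu_t$ costs only $O(\delta)$ inside an $O(\delta)$ shift, so it is consistent at first order.

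\textbf{Step 4 (proxy, pullback, noise space).} I would replace $h_t(x)$ by $\exp(\lambda f(\hat x_0(x),c))$; this is the denoised reward proxy, which collapses the future law to a point mass at the posterior mean. The chain rule then gives $\nabla_x\ell_t\approx\lambda\mathbf J_t^\top\nabla_{\hat x_0}f$. With $\Sigma_t=s_t^2I$, the shift is $\Delta\mu_t=\lambda s_t^2\mathbf J_t^\top\nabla f$. Matching $\mu_t^{\ours}=A_tx-C_t\epsilon_{\mathrm{vog}}$ against $A_tx-C_t\epsilon_c+\Delta\mu_t$ yields $\epsilon_{\mathrm{vog}}=\epsilon_c-(\lambda s_t^2/C_t)\mathbf J_t^\top\nabla f$. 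Setting $\zeta(t)=\lambda s_t^2/C_t>0$ gives the claim.

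\textbf{Main obstacle.} The delicate step is Step 2 together with its passage to Step 3. Rigorously, Doob requires $h$ to be positive and $C^{1}$, and it requires $\exp(\lambda f)$ to be integrable under $P_b$ so that $Q^*$ is a probability measure. I would state these as standing regularity assumptions; for $f_{\mathrm{cfg}}$ this means a bounded log-density ratio or a suitable growth condition.

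The proxy step is a modeling approximation rather than a derivation. I would flag it explicitly: it is Jensen-exact only as $t\to0$ or under vanishing posterior variance. The result should therefore be presented as first order in $\delta$ and in the posterior spread.
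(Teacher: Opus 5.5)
Your proposal is correct and follows the paper's derivation in Appendix~\ref{app:voc_to_jacobian_reg} essentially step for step. The steps are the Gibbs identity $\lambda\mathbb E_Q[f]-D_{\mathrm{KL}}(Q\|P)=\ell_t(x)-D_{\mathrm{KL}}(Q\|Q^*)$; the Doob generator conjugation, where your form $h^{-1}\mathcal L(hg)-h^{-1}g\,\mathcal L h$ agrees with the paper's space--time form once $-\partial_t h_t+\mathcal L_t h_t=0$ is used; completing the square for the linearized one-step tilt; and the denoised proxy with the affine relation $\mu_t=A_tx-C_t\epsilon$, which gives $\zeta(t)=\lambda s_t^2/C_t$. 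Your observation that replacing $\nabla\ell_{t-\delta}(\mu_t)$ by $\nabla_x\ell_t(x)$ costs only $O(\delta^2)$ in the mean is a small rigor point that the paper leaves implicit.
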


\noindent\textbf{Role of the denoising Jacobian.}
The key structural difference from CFG is the Jacobian \(\mathbf J_t\): CFG adds the guidance signal to the noise prediction directly, whereas $\ours$ first pulls it back through the denoiser via a vector-Jacobian product (VJP). In the linear subspace setting below the MMSE denoising Jacobian is a scaled orthogonal projector, so this VJP is exactly a projection onto the data subspace. For nonlinear denoisers it weights the correction by learned sensitivity. This provides a geometric mechanism for mitigating the normal drift (\(P_\perp\Delta_{\mathrm{eff}}\neq 0\)) of~\Cref{subsec:cfg_pitfalls}.

\subsection{Geometric Consistency Analysis}
We characterize the exact pullback relative to the data subspace $\mathcal{S}$ of~\Cref{assump:linear_subspace}.

\begin{theorem}[Subspace support of the exact correction]
\label{thm:geometric_invariance}
Let $P_{\perp} = I - UU^\top$ project onto the orthogonal complement of the data subspace, and let $\mathbf{g}_{\mathrm{vog}} \triangleq \mathbf{J}_t^\top \nabla_{\hat{x}_0} f(\hat{x}_0)$ denote the correction that $\ours$ \emph{adds} to the baseline. Under~\Cref{assump:linear_subspace} and the optimal linear MMSE denoiser $\hat{x}_0(x_t)=\kappa_t UU^\top x_t$, this correction lies in the data subspace, $P_\perp\mathbf g_{\mathrm{vog}}=0$.
\end{theorem}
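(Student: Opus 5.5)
The plan is to compute the denoising Jacobian explicitly under the stated linear MMSE denoiser and read off the range of its transpose.

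First, since $\hat x_0(x_t)=\kappa_t UU^\top x_t$ is linear in $x_t$, its Jacobian is constant:
\begin{equation*}
\mathbf J_t=\frac{\partial \hat x_0}{\partial x_t}=\kappa_t UU^\top=\kappa_t P_\parallel .
\end{equation*}
Because $P_\parallel=UU^\top$ is symmetric, we also have $\mathbf J_t^\top=\kappa_t P_\parallel$. Hence, for any gradient vector $v\triangleq\nabla_{\hat x_0}f(\hat x_0)\in\mathbb R^D$, whatever its direction (including components normal to $\mathcal S$),
\begin{equation*}
\mathbf g_{\mathrm{vog}}=\kappa_t UU^\top v .
\end{equation*}

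Second, we apply $P_\perp$. Using $U^\top U=I_d$ from \Cref{assump:linear_subspace},
\begin{equation*}
P_\perp UU^\top=UU^\top-UU^\top UU^\top=UU^\top-UU^\top=0,
\end{equation*}
so $P_\perp\mathbf g_{\mathrm{vog}}=\kappa_t P_\perp P_\parallel v=0$. Note that no property of $f$ is used beyond the existence of its gradient at $\hat x_0$.

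There is essentially no obstacle; the only point to handle carefully is regularity. The argument needs $f$ to be differentiable at $\hat x_0$, so that $v$ is defined, and $\kappa_t$ to be a finite scalar. For completeness, one may remark where the form $\kappa_t UU^\top x_t$ comes from, though this is not required since the statement takes it as given. Under the Gaussian latent model, with $x_t=\alpha_t x^*+\sigma_t\varepsilon$, one gets $\kappa_t=\alpha_t/(\alpha_t^2+\sigma_t^2)$.

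Finally, we note that the constant $\zeta(t)$ in \Cref{thm:vog} does not affect the conclusion: scaling preserves membership in $\mathcal S$. The full VOG correction $-\zeta(t)\mathbf g_{\mathrm{vog}}$ therefore also lies in the data subspace.
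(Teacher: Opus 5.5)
Your proposal is correct and follows the same route as the paper's proof. Both arguments compute $\mathbf J_t=\kappa_t UU^\top$, use its symmetry to write $\mathbf g_{\mathrm{vog}}=\kappa_t UU^\top\nabla_{\hat x_0}f\in\mathcal S$, and conclude from $P_\perp UU^\top=0$; your side derivation of $\kappa_t=\alpha_t/(\alpha_t^2+\sigma_t^2)$ is also correct, though not needed since the statement takes the denoiser form as given.
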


\begin{proof}
For the linear subspace denoiser, $\mathbf{J}_t = \kappa_t UU^\top$ is symmetric with both column and row space contained in $\mathcal{S}$. Hence $\mathbf{g}_{\mathrm{vog}} = \kappa_t UU^\top \nabla_{\hat{x}_0}f \in \mathcal{S}$, so $P_\perp \mathbf{g}_{\mathrm{vog}} = 0$.
\end{proof}

The exact linear denoiser therefore filters the reward gradient into $\mathcal S$. The implementation below approximates this pullback for neural denoisers.

\subsection{Practical Implementation Strategies}
\label{subsec:practical_tricks}

We use three strategies to balance efficiency and stability in practice.

\noindent \textbf{Strided gradient computation.}
We recompute the pullback every $K_{\mathrm{VOG}}$ steps and reuse the cached correction in between. This inference stride is distinct from DVOC's active-step count $K$.

\noindent \textbf{Factorized VJP surrogate.}
We approximate the conditional-noise VJP by $(\mathbf J_{c,t}^\top\mathbf1)\odot\mathbf g_t$, where $\mathbf J_{c,t}=\partial\epsilon_c/\partial x_t$ and $\mathbf g_t=\epsilon_c-\epsilon_u$. The sensitivity factor $\mathbf J_{c,t}^\top\mathbf1=\nabla_{x_t}(\mathbf1^\top\epsilon_c)$ is computed with one backward pass and can be cached separately from the contrast. \Cref{appendix:voc_simplification} derives the controller and gives the residual bound and differentiation-branch ablation.

\noindent \textbf{Momentum-enhanced guidance.}
High-frequency fluctuations in the raw guidance signal $\mathbf{g}_t = \epsilon_\theta(x_t, c) - \epsilon_\theta(x_t, \emptyset)$ can trigger over-suppression under Jacobian-based regularization. We smooth it with exponential momentum, $\mathbf{m}_t = \beta \mathbf{m}_{t+1} + \mathbf{g}_t$, and replace $\mathbf{g}_t$ by $\mathbf{m}_t$ in the update rule. Here $\beta$ is the momentum coefficient, distinct from the correction schedule $\zeta(t)$ of~\Cref{eq:final_form}. The uncached, zero-momentum core is given in~\Cref{alg:vog_sampling}. The schedules and caching described here are practical extensions.

\subsection{From State Control to Model Adaptation}
\label{sec:training}
The variational construction couples a preferred terminal change with the sensitivity of the controlled variable. VOG acts on the current state. DVOC acts on model parameters, amortizing terminal preferences across future trajectories. Their common local control problem yields the endpoint distillation objective below.

\paragraph{A shared local variational principle.}
Let $G_\xi$ map a control variable $\xi$ to an output representation, and let $\mathbf d$ denote a preferred change in that representation. Write $\mathbf J_\xi=\partial G_\xi/\partial\xi$. Linearizing the output response and assigning a quadratic cost to the control gives
\begin{equation}
 u_\xi^*=\arg\max_u\left\{
 \langle\mathbf d,\mathbf J_\xi u\rangle-\frac{\|u\|_2^2}{2\gamma}
 \right\}=\gamma\mathbf J_\xi^\top\mathbf d,
 \label{eq:local_control_principle}
\end{equation}
where $\gamma>0$ sets the control scale. This is the pullback structure exposed by the local VOG derivation. For state control, $G_\xi$ is the denoised prediction and $\xi=x_t$. For parameter control, the output is a terminal latent and $\xi=\phi$. Changing the controlled variable changes the Jacobian through which the preferred output displacement acts.

\paragraph{Constructing the terminal control target.}
For a prompt $c$, sample a trajectory with the current model and denote its terminal latent by $y=x_0$. Let $D$ be the frozen decoder and define $\mathcal R(y,c)=R(D(y),c)$. We seek a terminal improvement within a local displacement budget:
\begin{equation}
 \max_{\|u-y\|_2\le r}\mathcal R(u,c),
 \qquad r=\rho\|y\|_2.
 \label{eq:dvoc_endpoint_objective}
\end{equation}
Starting at $y$, we approximate this problem with $J$ normalized projected ascent steps. The reward gradient is itself a pullback, $\nabla_y\mathcal R=\mathbf J_D^\top\nabla_I R$. We denote the resulting detached target by $y^+$ and its displacement by $\delta=y^+-y$. The relative radius makes the control budget proportional to the sampled latent's scale. Updating the reward direction at each ascent step accounts for the local reward geometry along the target path. The projection and iteration are specified in \Cref{app:dvoc_teacher}.

\paragraph{Distilling the control into parameters.}
We transfer $\delta$ to the generator by fitting $y^+$ through an endpoint connector $y_\phi^{\mathcal A}$. Its forward value is the sampled endpoint $y$, and its parameter Jacobian is $\mathbf J_{\mathcal A}$. For a latent with $d$ coordinates, the per-sample objective is
\begin{equation}
 \begin{aligned}
 \mathcal L_{\mathrm{DVOC}}&=
 \frac{\|y_\phi^{\mathcal A}-y^+\|_2^2}{d\,s_\delta},\\
 s_\delta&=\sg\!\left(\max\!\left\{\frac{\|\delta\|_1}{d},10^{-5}\right\}\right).
 \end{aligned}
 \label{eq:distillation_loss}
\end{equation}
The target and normalization are held fixed during fitting. Since the residual at the sampled endpoint is $-\delta$, differentiation yields
\begin{equation}
 -\nabla_\phi\mathcal L_{\mathrm{DVOC}}
 =\frac{2}{d\,s_\delta}\mathbf J_{\mathcal A}^\top\delta.
 \label{eq:dvoc_gradient}
\end{equation}
Thus endpoint distillation realizes the parameter-space form of \Cref{eq:local_control_principle}, with $\mathbf d=\delta$ and scale $2/(d\,s_\delta)$. The terminal target supplies the control signal, while the parameter Jacobian determines how that signal is expressed by the generator. Averaging this loss over sampled prompts and trajectories amortizes the terminal controls into a shared model.

\paragraph{Efficient parameter pullback.}
We sample an Euler trajectory without retaining its computation graph, then choose $K$ steps uniformly without replacement. At these steps, we re-evaluate velocities at the cached states with parameter gradients enabled. Their Euler-weighted derivatives define $\mathbf J_{\mathcal A}$. The endpoint connector preserves the sampled forward value while passing gradients through these selected evaluations, as specified in \Cref{eq:dvoc_connector}. The complete endpoint supplies the target, and $K$ sets the backward computation. Our experiments use nine sampling steps, $K=3$, two target-ascent steps, relative radius $\rho=0.1$, and step multiplier $\eta=1$. We optimize LoRA parameters and use the EMA adapter for evaluation.

\begin{algorithm}[t]
\caption{DVOC training update}
\label{alg:dvoc_implementation}
\small
\begin{algorithmic}[1]
\REQUIRE LoRA parameters $\phi$, decoder $D$, reward $R$, $N$ sampling steps, $K$ active steps, $J$ target steps.
\STATE Sample prompts and generate $N$-step trajectories.
\STATE Cache states, velocities, and terminal latents $y$ without gradients.
\STATE Initialize $y^{(0)}=y$ and radius $r=\rho\|y\|_2$.
\FOR{$j=0,\ldots,J-1$}
\STATE Compute $\nabla_y R(D(y^{(j)}),c)$.
\STATE Take a normalized ascent step of length $\eta r/J$ and project onto $\mathcal B(y,r)$.
\ENDFOR
\STATE Detach the target $y^+=y^{(J)}$.
\FOR{each optimization microbatch}
\STATE Sample $K$ active steps uniformly without replacement.
\STATE Re-evaluate active velocities at the cached states.
\STATE Form the endpoint connector in Eq.~\eqref{eq:dvoc_connector}.
\STATE Accumulate scaled gradients of Eq.~\eqref{eq:distillation_loss}.
\ENDFOR
\STATE Average gradients across GPUs and clip their norm.
\STATE Update $\phi$ with AdamW and update the EMA adapter.
\end{algorithmic}
\end{algorithm}

\paragraph{Control geometry and amortization.}
In \Cref{eq:final_form,eq:dvoc_gradient}, both controls weight terminal preferences by the sensitivity of the controlled variable. VOG realizes the state-space geometry characterized in \Cref{thm:geometric_invariance}. DVOC shares parameter updates across sampled conditions and stores the control in the generator. The adapted model then uses the ordinary sampler at inference. \Cref{fig:vog_overview} summarizes the two realizations, and \Cref{alg:dvoc_implementation} gives the training procedure.

\section{Experiments}
\label{sec:experiments}
We evaluate VOG on synthetic and class-conditional ImageNet generation, and DVOC on reward-based Z-Image-Turbo adaptation.
\begin{table*}[!htbp]
\centering
\caption{$\ours$ as a plug-in correction on top of CFG-style guidance baselines, on class-conditional ImageNet. Shaded rows apply $\ours$ to the baseline immediately above them. We report FID ($\downarrow$), Precision ($\uparrow$), and Recall ($\uparrow$); within each column the best result is \textbf{bolded} and the second best \underline{underlined}. A dash indicates a configuration the baseline does not support.}
\label{tab:main_results}
\resizebox{\textwidth}{!}{%
\begin{tabular}{l |ccc |ccc |ccc |ccc}
\toprule
& \multicolumn{3}{c|}{EDM2-S (512 $\times$ 512)} & \multicolumn{3}{c|}{DiT-XL/2 (256 $\times$ 256)} & \multicolumn{3}{c|}{EDM2-L (64 $\times$ 64)} & \multicolumn{3}{c}{DiT-XL/2 (512 $\times$ 512)} \\
Guidance & FID & Precision & Recall & FID & Precision & Recall & FID & Precision & Recall & FID & Precision & Recall \\
\midrule
CFG \cite{ho2022classifierfreediffusionguidance}
    & 2.29 & 0.751 & 0.582 
    & 2.27 & 0.829 & 0.584 
    & 1.99 & 0.710 & \textbf{0.601} %
    & 3.04 & \textbf{0.838} & 0.541 \\ %
\rowcolor{lightCyan}
\quad + $\ours$ (Ours)
    & 2.03 & 0.755 & 0.591 
    & 2.04 & 0.756 & 0.607 
    & \textbf{1.42} & 0.714 & \underline{0.597} %
    & 2.89 & 0.745 & 0.560 \\
Interval \cite{kynkaanniemi2024applying}
    & \underline{1.87} & 0.760 & 0.598 %
    & \underline{1.97} & 0.801 & \textbf{0.632} %
    & 1.64 & \textbf{0.730} & \underline{0.597} %
    & \underline{2.41} & \underline{0.792} & \textbf{0.611} \\ %
\rowcolor{lightCyan}
\quad  + $\ours$ (Ours)
    & \textbf{1.67} & 0.742 & 0.614 %
    & \textbf{1.94} & 0.765 & 0.611 %
    & \underline{1.52} & 0.720 & 0.593 %
    & \textbf{2.39} & 0.760 & 0.567 \\ %
Cosine \cite{gao2023cosine}
    & 2.15 & \textbf{0.770} & \underline{0.619} %
    & 2.30 & \textbf{0.861} & 0.520 %
    & 1.67 & 0.726 & 0.587 
    & 3.48 & 0.713 & \underline{0.586} \\ %
\rowcolor{lightCyan}
\quad  + $\ours$ (Ours)
    & 2.06 & 0.758 & 0.589 
    & 2.03 & 0.757 & 0.612 
    & 1.65 & \underline{0.727} & 0.588 %
    & 2.83 & 0.747 & 0.563 \\
 Scheduler \cite{wang2024analysis}
    & 2.03 & \underline{0.762} & 0.591 %
    & 2.14 & \underline{0.840} & 0.614 %
    & - & - & - 
    & - & - & - \\
APG \cite{sadat2024eliminating}
    & 2.13 & 0.756 & \textbf{0.640} %
    & 2.11 & 0.815 & \underline{0.628} %
    & - & - & - 
    & - & - & - \\
ReCFG \cite{xia2025rectified} 
    & 2.23 & 0.761 & 0.608 
    & 2.13 & 0.799 & 0.601 
    & - & - & - 
    & - & - & - \\
\bottomrule
\end{tabular}%
}
\end{table*}
\vspace{-0.2cm}

\subsection{Evaluation Protocols}
\paragraph{Sampling-time guidance.}
We use EDM2~\cite{Karras2024EDM2} and DiT~\cite{Peebles2023DiT} on ImageNet~\cite{deng2009imagenet} at $64^2$, $256^2$, and $512^2$ resolutions. Baselines include CFG, Cosine and Interval schedules, AutoGuidance, APG, and ReCFG~\cite{ho2022classifierfreediffusionguidance,kynkaanniemi2024applying,Karras2024autoguidance,sadat2024eliminating,xia2025rectified}. We report FID~\cite{heusel2017gans} from 50,000 samples together with Precision and Recall~\cite{sajjadi2018assessing}. Architecture-specific settings are provided in the supplement.

\paragraph{Reward-based post-training.}
DVOC, ReFL, DRTune, and DiffusionOPSD start from Z-Image-Turbo~\cite{ZImageTurbo}. We also evaluate its untrained original weights. The trained methods share the training prompt source, the weighted reward $0.03846\,R_{\mathrm{HPSv3++}}+R_{\mathrm{CLIP}}$, and a nominal budget of 576 trajectories per optimizer update. We evaluate their EMA adapters at \textbf{update 20}, with nine denoising steps per evaluation sample. Each trained experiment uses eight GPUs, LoRA rank 32 and scale 64, AdamW with learning rate $3\times10^{-4}$, and weight decay $10^{-4}$ (DVOC, DiffusionOPSD) or $10^{-2}$ (ReFL, DRTune). The supplement reports method settings and observed per-update times (\Cref{tab:dvoc_training_time}).

The test set is the complete 1,000-prompt Pick-a-Pic split~\cite{Kirstain2023PickaPic}. All methods use the same prompts and initial noise, with $1024\times1024$ outputs, nine deterministic Euler steps, and guidance scale 0. All seven metrics are computed on the same generated images. We use HPSv3++~\cite{liu2026hpsv3scalingrewardmodels} and CLIP~\cite{radford2021clip} as \emph{in-domain} evaluators because they enter the training reward. Q-Align quality and aesthetics~\cite{QAlign2024}, AES~\cite{Schuhmann2022LaionAesthetics}, ImageReward~\cite{Xu2023ImageReward}, and DeQA~\cite{DeQA2025} are \emph{out-of-domain} (OOD) evaluators because they are held out of optimization. Here OOD denotes evaluators held out of the training objective. All post-training results use one training seed.

\subsection{Sampling-Time Guidance Results}
\paragraph{Synthetic geometry.}
On the 2D branching-tree benchmark, increasing class overlap makes coverage more difficult. The two-class and three-class settings have covariance traces 0.228 and 0.403. CFG removes outliers but reduces Recall to 0.876 and 0.779, respectively. VOG attains Recall 0.957 and 0.933 while achieving the highest Precision among the guided samplers. The visualization and numerical results are in \Cref{fig:toy_example} and the supplement.

\paragraph{ImageNet.}
\Cref{tab:main_results} evaluates VOG on top of three guidance baselines in four model--resolution settings. FID improves in all twelve pairings, including from 1.99 to 1.42 for EDM2-L at $64^2$. For DiT-XL/2 with Cosine guidance, Recall rises from 0.520 to 0.612 while Precision decreases, producing a better FID operating point.

\paragraph{Guidance scale and sampling budget.}
VOG remains effective over a wider range of guidance strengths than CFG in \Cref{fig:ablation_omega}. On DiT-XL/2, its FID is below 2.60 at guidance scale 1.7, where CFG exceeds 3.40. The benefit also persists with fewer sampling steps: FID decreases from 7.58 to 4.44 at 25 steps and from 3.69 to 2.99 at 50 steps. The VJP stride and momentum sweep in \Cref{fig:ablation_momentum} concerns VOG inference, independently of the active-step budget $K$ used by DVOC.

\subsection{Reward Alignment and Generalization}
\begin{table*}[t]
\centering
\caption{\textbf{Reward alignment after 20 optimization updates on Z-Image-Turbo.} The untrained base and four trained methods use the same 1,000 prompts and initial noise. HPSv3++/CLIP are in-domain; the others are OOD. $\Delta$: score minus untrained base. Higher is better. Best/second-best: \textbf{bold}/\underline{underlined}. Blue: DVOC.}
\label{tab:t2i_results}
\setlength{\tabcolsep}{2pt}
\renewcommand{\arraystretch}{1.12}
\resizebox{\textwidth}{!}{%
\begin{tabular}{@{}l cc cc cc cc cc cc cc@{}}
\toprule
\multirow{3}{*}{Method} & \multicolumn{4}{c}{\textbf{In-Domain}} & \multicolumn{10}{c}{\textbf{Out-of-Domain}} \\
\cmidrule(lr){2-5}\cmidrule(lr){6-15}
 & \multicolumn{2}{c}{HPSv3++} & \multicolumn{2}{c}{CLIP} & \multicolumn{2}{c}{Q-Align Q.} & \multicolumn{2}{c}{Q-Align A.} & \multicolumn{2}{c}{AES} & \multicolumn{2}{c}{ImageReward} & \multicolumn{2}{c}{DeQA} \\
\cmidrule(lr){2-3} \cmidrule(lr){4-5} \cmidrule(lr){6-7} \cmidrule(lr){8-9} \cmidrule(lr){10-11} \cmidrule(lr){12-13} \cmidrule(lr){14-15}
 & Score$\uparrow$ & \dt{$\Delta$} & Score$\uparrow$ & \dt{$\Delta$} & Score$\uparrow$ & \dt{$\Delta$} & Score$\uparrow$ & \dt{$\Delta$} & Score$\uparrow$ & \dt{$\Delta$} & Score$\uparrow$ & \dt{$\Delta$} & Score$\uparrow$ & \dt{$\Delta$} \\
\midrule
\multicolumn{15}{l}{\colorbox{groupShade}{\textbf{\textit{Z-Image-Turbo, 1024$^2$, nine-step Euler sampler}}}} \\
\midrule
Z-Image-Turbo (untrained) & 6.72 & \dt{---} & 0.2757 & \dt{---} & 4.825 & \dt{---} & 3.533 & \dt{---} & 5.412 & \dt{---} & 0.959 & \dt{---} & \best{4.439} & \dt{---} \\
ReFL~\cite{Xu2023ImageReward} & 12.22 & \dt{$+5.50$} & 0.2744 & \dt{$-0.0013$} & 4.844 & \dt{$+0.020$} & 4.223 & \dt{$+0.691$} & 5.955 & \dt{$+0.543$} & 1.159 & \dt{$+0.200$} & 4.133 & \dt{$-0.306$} \\
DRTune~\cite{wu2024drtune} & 12.70 & \dt{$+5.97$} & 0.2688 & \dt{$-0.0070$} & \second{4.932} & \dt{$+0.107$} & \second{4.489} & \dt{$+0.956$} & 6.095 & \dt{$+0.683$} & \second{1.225} & \dt{$+0.266$} & \second{4.262} & \dt{$-0.177$} \\
DiffusionOPSD~\cite{zhou2026diffusionopsd} & \second{13.56} & \dt{$+6.83$} & \best{0.2782} & \dt{$+0.0025$} & 4.886 & \dt{$+0.061$} & 4.452 & \dt{$+0.920$} & \second{6.101} & \dt{$+0.689$} & 1.196 & \dt{$+0.237$} & 4.014 & \dt{$-0.425$} \\
\midrule
\rowcolor{lightCyan}
\textbf{DVOC} ($K=3$) & \best{13.79} & \dt{$+7.06$} & \second{0.2765} & \dt{$+0.0008$} & \best{4.943} & \dt{$+0.118$} & \best{4.618} & \dt{$+1.085$} & \best{6.288} & \dt{$+0.876$} & \best{1.279} & \dt{$+0.320$} & 4.237 & \dt{$-0.202$} \\
\bottomrule
\end{tabular}}
\end{table*}

\begin{figure*}[t]
\centering
\includegraphics[width=\textwidth]{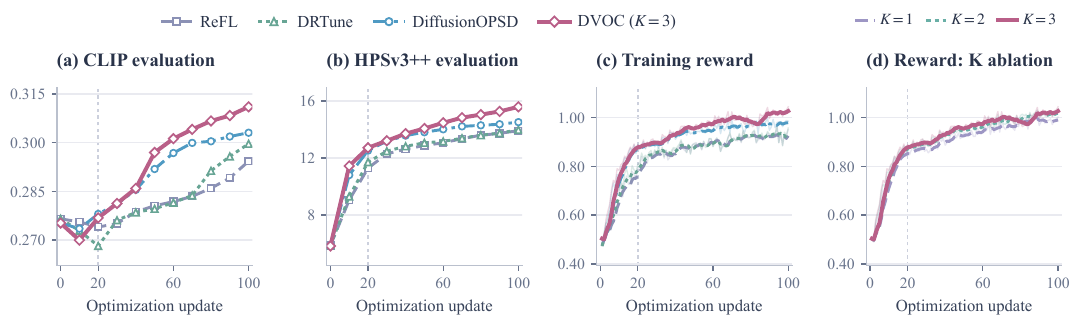}
\caption{\textbf{Evaluation and training dynamics.} (a,b) Online EMA evaluation every ten updates, using HPSv3++ condition 0.0 and trainer-specific noise sampling. (c) Training reward, $0.03846\,R_{\mathrm{HPSv3++}}+R_{\mathrm{CLIP}}$: ReFL/DRTune score late-state proxies; DVOC/OPSD score sampled endpoints. (d) DVOC training reward for $K=1,2,3$. Training panels show raw means (faint) and trailing five-update averages; evaluation is unsmoothed. The vertical line marks update 20. DVOC uses $K=3$ in (a--c); (d) is the ablation.}
\label{fig:dvoc_training_overview}
\end{figure*}

\paragraph{In-domain alignment and held-out scores.}
\Cref{tab:t2i_results} uses the untrained original model as the reference, with \textbf{DVOC at $K=3$}. DVOC raises HPSv3++ from 6.72 to 13.79, the highest trained mean. CLIP changes from 0.2757 to 0.2765. DiffusionOPSD achieves the highest CLIP score at 0.2782.

DVOC leads trained methods on Q-Align quality (4.943), Q-Align aesthetics (4.618), AES (6.288), and ImageReward (1.279), improving over the base on all four. DeQA is 4.237, compared with 4.439 for the base. \Cref{tab:dvoc_paired} reports paired uncertainty for all seven main-result differences.

Qualitative comparisons with the base model and the three trained baselines are provided in \Cref{fig:dvoc_qualitative} in the appendix.

\paragraph{Optimization dynamics.}
\Cref{fig:dvoc_training_overview}(a,b) tracks the two in-domain evaluators through 100 updates. From update 20 to 100, DVOC's online CLIP increases from 0.2768 to 0.3111 and HPSv3++ from 12.71 to 15.59. Panels (c,d) show logged training reward across methods and across $K$, respectively. These online evaluations use a different protocol from the paired main table. Extended histories are in \Cref{app:training_curves}.

\begin{table*}[t]
\centering\small
\caption{\textbf{Target construction at fixed $K=3$.} Three paired runs use 20 updates and 1,000 matched test images. All scores are higher-is-better. Bold marks the highest mean within this ablation.}
\label{tab:dvoc_target_ablation}
\setlength{\tabcolsep}{6pt}
\begin{tabular}{@{}lrr|rrrrr@{}}
\toprule
& \multicolumn{2}{c|}{In-domain} & \multicolumn{5}{c}{OOD} \\
Variant & HPSv3++ & CLIP & Q-Quality & Q-Aesth. & AES & ImageReward & DeQA \\
\midrule
\rowcolor{blue!7}
Endpoint, $J=2$ & \textbf{13.727} & 0.2760 & \textbf{4.947} & \textbf{4.640} & \textbf{6.281} & 1.2678 & \textbf{4.252} \\
Endpoint, $J=1$ & 13.653 & \textbf{0.2781} & 4.927 & 4.629 & 6.211 & 1.2673 & 4.131 \\
Proxy, $J=2$ & 13.705 & 0.2763 & 4.935 & 4.618 & 6.277 & \textbf{1.2684} & 4.242 \\
\bottomrule
\end{tabular}
\end{table*}

\subsection{DVOC Ablations}
\begin{table*}[t]
\centering
\caption{\textbf{Active-step ablation.} Only $K$ changes; $J=2$ and the endpoint radius are fixed. Means use the same 1,000 paired prompts at update 20. Bold/underlining mark best/second-best within this table.}
\label{tab:dvoc_ablation}
\small
\setlength{\tabcolsep}{5pt}
\renewcommand{\arraystretch}{1.12}
\begin{tabular}{@{}lrrrrrrr@{}}
\toprule
\multirow{2}{*}{DVOC variant} & \multicolumn{2}{c}{\textbf{In-Domain}} & \multicolumn{5}{c}{\textbf{Out-of-Domain}} \\
\cmidrule(lr){2-3}\cmidrule(lr){4-8}
 & HPSv3++$\uparrow$ & CLIP$\uparrow$ & Q-Align Q.$\uparrow$ & Q-Align A.$\uparrow$ & AES$\uparrow$ & ImageReward$\uparrow$ & DeQA$\uparrow$ \\
\midrule
$K=1$ & 13.36 & 0.2729 & \best{4.948} & \best{4.628} & 6.272 & 1.224 & \second{4.249} \\
$K=2$ & \second{13.69} & \second{0.2751} & \second{4.947} & 4.610 & \second{6.274} & \second{1.257} & \best{4.263} \\
\rowcolor{lightCyan}
$K=3$ (default) & \best{13.79} & \best{0.2765} & 4.943 & \second{4.618} & \best{6.288} & \best{1.279} & 4.237 \\
\bottomrule
\end{tabular}
\end{table*}

\paragraph{Active-step budget.}
\Cref{tab:dvoc_ablation} varies $K$ with shared forward sampling and target construction. $K=3$ leads on HPSv3++, CLIP, AES, and ImageReward. $K=1$ leads on both Q-Align scores, and $K=2$ on DeQA, showing that the preferred budget depends on the evaluator.

\paragraph{Target construction.}
Three paired runs fix $K=3$ and share the initial adapter and training streams (\Cref{app:dvoc_target_ablation}). Relative to one ascent step, endpoint $J=2$ raises Q-Align quality by 0.0202, AES by 0.0692, and DeQA by 0.1203, with a CLIP decrease of 0.00210. At the same nominal ascent length, two steps refresh the intermediate reward gradient. One step takes 414.1 seconds per update versus 461.7 for two steps, a 10.3\% reduction.

The proxy ablation optimizes the current policy's late-state clean prediction and transports its displacement to the sampled endpoint. Endpoint $J=2$ improves Q-Align quality, Q-Align aesthetics, and DeQA by 0.0125, 0.0215, and 0.0100, respectively. The full seven-metric comparison is in \Cref{tab:dvoc_target_ablation}.

\subsection{Computational Cost}
The nine-step rollout and two target-gradient evaluations are shared across the active-step variants. Increasing $K$ expands the parameter pullback while retaining the same sampled trajectories and target construction. On one eight-GPU node, the observed mean times are 325.8, 391.2, and 458.0 seconds per update for $K=1,2,3$, respectively (\Cref{tab:dvoc_training_time}). Relative to $K=3$, $K=1$ reduces update time by 28.9\% and $K=2$ by 14.6\%. The main configuration uses $K=3$, which leads the active-step comparison on HPSv3++, CLIP, AES, and ImageReward.

At fixed $K=3$, one and two target steps take 414.1 and 461.7 seconds per update. The target ablation in \Cref{tab:dvoc_target_ablation} reports their quality--cost trade-off. At inference, all DVOC variants use the same nine-step sampler.

\section{Conclusion}
\label{sec:conclusion}
Variational control connects sampling-time guidance and reward adaptation through a shared Jacobian pullback. VOG realizes control on individual trajectories, and DVOC amortizes endpoint improvements into the generator. VOG improves FID across the evaluated ImageNet settings. After 20 updates on Z-Image-Turbo, DVOC improves HPSv3++ and four held-out scores over the base and leads the trained comparators on these metrics. Our analyses characterize the common control geometry through subspace structure and the variational derivation of parameter control.

\FloatBarrier
\appendix
\setcounter{topnumber}{4}
\setcounter{bottomnumber}{3}
\setcounter{totalnumber}{6}
\renewcommand{\topfraction}{0.9}
\renewcommand{\bottomfraction}{0.8}
\renewcommand{\textfraction}{0.08}
\renewcommand{\floatpagefraction}{0.75}
\setlength{\textfloatsep}{12pt plus 2pt minus 2pt}
\setlength{\floatsep}{10pt plus 2pt minus 2pt}
\setlength{\intextsep}{12pt plus 2pt minus 2pt}
\setcounter{figure}{0}\setcounter{table}{0}\setcounter{algorithm}{0}
\renewcommand{\thefigure}{S\arabic{figure}}
\renewcommand{\thetable}{S\arabic{table}}
\renewcommand{\thealgorithm}{S\arabic{algorithm}}
\crefalias{section}{appendix}\crefalias{subsection}{appendix}
\section{Preliminaries and Notation}
\label{sec:preliminaries}
We collect the diffusion and guidance notation used in the derivations.
\subsection{Diffusion and Guidance}
The VP perturbation is $x_t=\sqrt{\bar\alpha_t}x_0+\sqrt{1-\bar\alpha_t}\epsilon$, with $\epsilon\sim\mathcal N(0,I)$ and cumulative signal power $\bar\alpha_t$. A noise predictor trained by denoising regression~\cite{Ho2020DDPM} gives the model score $s_\theta(x_t,t)=-\epsilon_\theta(x_t,t)/\sqrt{1-\bar\alpha_t}$~\cite{Song2019Gradients,Song2021SDE}.

Classifier guidance adds a scaled gradient of a noisy-image classifier's log probability to the model score~\cite{Dhariwal2021DiffusionBeatGANs}. CFG uses the contrast between conditional and unconditional predictions~\cite{ho2022classifierfreediffusionguidance}.

\paragraph{Classifier-Free Guidance (CFG).}
To avoid the complexity of external classifiers, Ho and Salimans~\cite{ho2022classifierfreediffusionguidance} proposed effectively learning an implicit classifier by jointly training a single model for both conditional $\epsilon_\theta(\mathbf{x}_t, c)$ and unconditional $\epsilon_\theta(\mathbf{x}_t, \emptyset)$ generation.
During inference, the guided noise prediction is computed as a linear extrapolation:
\begin{equation}
    \tilde{\epsilon}_t(\mathbf{x}_t, c) = \epsilon_\theta(\mathbf{x}_t, \emptyset) + \omega \left( \epsilon_\theta(\mathbf{x}_t, c) - \epsilon_\theta(\mathbf{x}_t, \emptyset) \right).
    \label{eq:cfg_update}
\end{equation}
Here, $\omega$ serves as the guidance scale.
Eq.~\eqref{eq:cfg_update} can be interpreted as implicitly performing gradient ascent on the log-likelihood ratio $\log [ p(c|\mathbf{x}_t) / p(c) ]$, treating the difference term as a proxy for the gradient of the condition.
Large $\omega$ can increase saturation and reduce sample coverage~\cite{sadat2024eliminating,chung2024cfg++}.

\subsection{Summary of Notation}
\label{subsec:notation}
\Cref{tab:notation} collects the notation. We distinguish the CFG scale $\omega$, VOG correction schedule $\zeta(t)$, momentum coefficient $\beta$, and VP noise rate $\beta_{\mathrm{vp}}(t)$.

\begin{table}[ht]
\centering
\small
\caption{Notation used in the main text.}
\label{tab:notation}
\begin{tabular}{ll}
\toprule
Symbol & Meaning \\
\midrule
$x_t$, $\hat{x}_0$ & noisy state at time $t$; denoised prediction of the clean sample \\
$\epsilon_c$, $\epsilon_u$, $\epsilon_{\mathrm{vog}}$ & conditional and unconditional predictions; $\ours$-corrected prediction \\
$\bar\alpha_t$, $\sqrt{1-\bar\alpha_t}$ & cumulative signal power and VP noise standard deviation \\
$c$, $f(x_0,c)$ & conditioning signal; terminal reward on the clean sample \\
$\omega$, $w=\omega-1$ & CFG guidance scale; its conditional-centered form, the control strength \\
$\lambda$ & reward temperature in the VOC objective \\
$\zeta(t)$ & schedule of the $\ours$ noise-space correction \\
$\beta$ & momentum coefficient for smoothing the guidance signal \\
$K_{\mathrm{VOG}}$ & inference-time interval between VJP recomputations \\
$K$, $J$, $\rho$ & DVOC active-step count, ascent iterations, and relative endpoint radius \\
$\mathbf{J}_t=\partial\hat{x}_0/\partial x_t$ & denoising Jacobian; the correction uses the VJP $\mathbf{J}_t^\top\nabla f$ \\
$h_t(x)$, $\ell_t(x)=\log h_t(x)$ & desirability function and its logarithm \\
$b_\theta$, $a_t$ & drift and diffusion covariance of the pretrained sampler \\
$\mathcal{S}=\mathrm{span}(U)$, $P_\parallel$, $P_\perp$ & data subspace and the projectors onto it and its complement \\
$\phi$, $\delta$, $s_\delta$ & LoRA parameters, endpoint target displacement, and detached loss normalization \\
$\mathbf J_{\mathcal A}$, $y_\phi^{\mathcal A}$ & selected parameter Jacobian and its endpoint connector \\
\bottomrule
\end{tabular}
\end{table}

\section{Derivation of Variational Optimal Guidance}
\label{app:voc_to_jacobian_reg}
We derive the tilted path measure, its drift correction, and the local Jacobian form used by VOG. Throughout, $P=P_{b,t:0}(\cdot\mid X_t=x,c)$ is the pretrained conditional continuation from the current state to the clean endpoint.

\subsection{Optimal path measure}
\label{app:gibbs}
Let $h_t(x)=\mathbb E_P[\exp(\lambda f(X_0,c))]$ and $\ell_t(x)=\log h_t(x)$. Assume $0<h_t(x)<\infty$ and finite reward and KL terms. For the tilted measure $Q^*$ in Eq.~\eqref{eq:continuous_conditional_tilt}, substitution of its density gives
\begin{equation}
 \lambda\mathbb E_Q[f]-D_{\mathrm{KL}}(Q\|P)
 =\ell_t(x)-D_{\mathrm{KL}}(Q\|Q^*).
 \label{app:gibbs_identity}
\end{equation}
The nonnegativity of KL establishes that $Q^*$ uniquely maximizes Eq.~\eqref{eq:conditional_voc_continuous}, with value $\ell_t(x)$.

\subsection{Drift correction and control cost}
\label{app:doob}
Use the increasing denoising clock $s=T-t$, with $Y_s=X_{T-s}$. The reference process has drift $b_\theta$, diffusion coefficient $\sigma_t$, and covariance $a_t=\sigma_t\sigma_t^\top$. Its generator is $\mathcal L_t\varphi=b_\theta^\top\nabla\varphi+\tfrac12\operatorname{Tr}(a_t\nabla^2\varphi)$. The Markov property of the tilted measure gives
\begin{equation}
 Q^*(X_r\in dy\mid X_t=x)
 =p_b(t,x;r,dy)\frac{h_r(y)}{h_t(x)},\qquad 0\le r<t.
 \label{app:continuous_doob_transition}
\end{equation}
The desirability satisfies $-\partial_t h_t+\mathcal L_t h_t=0$ with $h_0=\exp(\lambda f)$. Applying the product rule to the transformed generator yields
\begin{equation}
 \mathcal L_t^h\varphi
 =\frac{-\partial_t(h_t\varphi)+\mathcal L_t(h_t\varphi)}{h_t}
 =\mathcal L_t\varphi+(a_t\nabla\ell_t)^\top\nabla\varphi.
 \label{app:doob_generator}
\end{equation}
Thus the optimal drift is $b^*=b_\theta+a_t\nabla\ell_t$, with the reference diffusion covariance unchanged.

For a drift perturbation $\Delta b$ and invertible $a_t$, Girsanov's identity expresses the path KL as
\begin{equation}
 D_{\mathrm{KL}}(Q\|P)
 =\frac12\mathbb E_Q\!\left[\int_0^t
 \Delta b(X_\tau,\tau)^\top a_\tau(X_\tau)^{-1}
 \Delta b(X_\tau,\tau)\,d\tau\,\middle|\,X_t=x\right].
 \label{app:girsanov_quadratic}
\end{equation}
This identity assumes an integrable control energy and a true Girsanov martingale. It identifies the covariance-weighted quadratic cost underlying the local control objective.

\subsection{Local transition and Jacobian pullback}
\label{app:pullback}
For a step of size $\delta$, approximate the reference transition by $\mathcal N(\mu_t,\Sigma_t)$, where $\mu_t=x+b_\theta(x,t)\delta$ and $\Sigma_t=a_t(x)\delta$. Linearizing $\ell_{t-\delta}(y)$ in $y$ and writing $g_t=\nabla_x\ell_t(x)$ gives
\begin{equation}
 \widetilde q(y\mid x)\propto
 \exp\!\left[-\tfrac12(y-\mu_t)^\top\Sigma_t^{-1}(y-\mu_t)
 +g_t^\top(y-\mu_t)\right].
 \label{app:linearized_tilt}
\end{equation}
Completing the square gives $\mathcal N(\mu_t+\Sigma_tg_t,\Sigma_t)$, proving the mean shift in Eq.~\eqref{eq:local_mean_shift}.

We evaluate the terminal reward on the conditional clean prediction $\hat x_{0,c}=(x_t-\sqrt{1-\bar\alpha_t}\,\epsilon_c)/\sqrt{\bar\alpha_t}$. The proxy $\ell_t(x)\approx\lambda f(\hat x_{0,c}(x),c)$ gives the chain-rule correction
\begin{equation}
 \Delta\mu_t\approx\lambda\Sigma_t\mathbf J_t^\top
 \nabla_{\hat x_0}f(\hat x_{0,c},c),
 \qquad \mathbf J_t=\frac{\partial\hat x_{0,c}}{\partial x_t}.
 \label{app:jacobian_pullback}
\end{equation}
The reverse mean is affine in noise prediction, $\mu_t=A_tx-C_t\epsilon$ with $C_t>0$, so $\Delta\epsilon_t=-C_t^{-1}\Delta\mu_t$. For isotropic covariance $\Sigma_t=s_t^2I$, setting $\zeta(t)=\lambda s_t^2/C_t$ yields Eq.~\eqref{eq:final_form}. This completes the derivation of \Cref{thm:vog}.

\section{The CFG-Specific \texorpdfstring{$\ours$}{VOG} Controller}
\label{appendix:voc_simplification}
We instantiate the terminal objective with the conditional log-ratio defined in \Cref{sec:voc}. Write $\epsilon_c=\epsilon_\theta(x_t,t,c)$, $\epsilon_u=\epsilon_\theta(x_t,t,\varnothing)$, $\mathbf g_t=\epsilon_c-\epsilon_u$, and $w=\omega-1$. Standard CFG is $\epsilon_c+w\mathbf g_t$.

\subsection{Conditional clean-prediction pullback}
\label{app:conditional_pullback}
At noise level $t$, the model score contrast is $s_c-s_u=-\mathbf g_t/\sqrt{1-\bar\alpha_t}$. We use this contrast as a proxy for the clean-space log-ratio gradient. Its negative sign cancels the mean-to-noise sign in the preceding derivation. Let $\mathbf J_{c,t}=\partial\epsilon_c/\partial x_t$ and $\rho_t=\sqrt{1-\bar\alpha_t}$. Differentiating the conditional clean prediction gives
\begin{equation}
 \mathbf J_t^\top\mathbf g_t
 =\frac{1}{\sqrt{\bar\alpha_t}}
 \left(\mathbf g_t-\rho_t\mathbf J_{c,t}^\top\mathbf g_t\right).
 \label{app:conditional_pullback_formula}
\end{equation}
The positive scalar factor is included in the guidance schedule.

\subsection{Factorized VJP}
\label{app:factorized}
We compute the sensitivity $\mathbf J_{c,t}^\top\mathbf 1$ with one backward pass and combine it element-wise with the guidance contrast. The difference from the full VJP is
\begin{equation}
 \begin{aligned}
 \mathbf J_{c,t}^\top\mathbf g_t
 &=(\mathbf J_{c,t}^\top\mathbf 1)\odot\mathbf g_t+\mathbf r_t,\\
 r_{t,i}&=\sum_j\frac{\partial\epsilon_{c,j}}{\partial x_{t,i}}
 (g_{t,j}-g_{t,i}).
 \end{aligned}
 \label{app:centered_identity}
\end{equation}
This identity follows by adding and subtracting $g_{t,i}$ in each coordinate sum. For the influence set $\mathcal I(i)=\{j:\partial\epsilon_{c,j}/\partial x_{t,i}\ne0\}$, the triangle inequality bounds the residual by
\begin{equation}
 |r_{t,i}|\le
 \left(\sum_j\left|\frac{\partial\epsilon_{c,j}}{\partial x_{t,i}}\right|\right)
 \max_{j\in\mathcal I(i)}|g_{t,j}-g_{t,i}|.
 \label{app:factorization_error}
\end{equation}
For an empty influence set, the maximum is defined as zero. The residual also vanishes when the contrast is constant on the influence set, including the case of a diagonal Jacobian. Dropping it gives the practical controller
\begin{equation}
 \epsilon_{\mathrm{vog}}=\epsilon_c+
 w\left[\mathbf 1-\rho_t\nabla_{x_t}(\mathbf 1^\top\epsilon_c)\right]
 \odot\mathbf g_t.
 \label{app:final_voc_approx}
\end{equation}
The sensitivity factor can be cached independently of the contrast. The element-wise update also preserves zero coordinates of the CFG direction. Stride and momentum are specified in \Cref{subsec:practical_tricks}.

\subsection{Choice of differentiation branch}
\label{app:direct_cfg}
We differentiate the conditional prediction that defines the reference process. To examine this choice, consider differentiating $\epsilon_{\mathrm{cfg}}=\epsilon_c+w\mathbf g_t$ instead. With $\mathbf J_{u,t}=\partial\epsilon_u/\partial x_t$ and $\mathbf A_{\mathrm{cfg},t}$ the resulting clean-prediction Jacobian, its pullback is
\begin{equation}
 w\mathbf A_{\mathrm{cfg},t}^\top\mathbf g_t
 =w\mathbf J_t^\top\mathbf g_t
 -\frac{w^2\rho_t}{\sqrt{\bar\alpha_t}}
 (\mathbf J_{c,t}-\mathbf J_{u,t})^\top\mathbf g_t.
 \label{app:direct_step2}
\end{equation}
The additional term is quadratic in guidance strength. Using the conditional branch keeps the sensitivity map independent of $w$ at a fixed state. \Cref{tab:controller_ablation} evaluates the two choices.

\begin{table}[ht]
\centering
\small
\caption{Differentiation-branch ablation on DiT-XL/2 ($256\times256$), with VJP stride $K_{\mathrm{VOG}}=4$. Runtime is relative to CFG.}
\label{tab:controller_ablation}
\begin{tabular}{lcc}
\toprule
Controller & FID ($\downarrow$) & Relative runtime \\
\midrule
Pullback through $\epsilon_{\mathrm{cfg}}$ & 5.48 & -- \\
\rowcolor{lightCyan}
Pullback through $\epsilon_c$ (VOG) & \textbf{2.04} & $1.55\times$ \\
\bottomrule
\end{tabular}
\end{table}

\noindent\textbf{Sampling procedure.}\par
\begin{algorithm}[H]
\caption{VOG factorized core controller}
\label{alg:vog_sampling}
\small
\begin{algorithmic}[1]
\REQUIRE Conditional/unconditional predictor, condition $c$, guidance strength $\omega$, and baseline numerical sampler.
\STATE Sample $x_T$; set $w=\omega-1$.
\FOR{each reverse sampling step $t$}
\STATE Compute $\epsilon_c=\epsilon_\theta(x_t,t,c)$, $\epsilon_u=\epsilon_\theta(x_t,t,\varnothing)$, and $g_t=\epsilon_c-\epsilon_u$.
\STATE Compute $h_t=\nabla_{x_t}(\mathbf1^\top\epsilon_c)$ by one VJP.
\STATE Set $\rho_t=\sqrt{1-\bar\alpha_t}$ for the VP parameterization.
\STATE Form $\epsilon_{\mathrm{vog}}=\epsilon_c+w(\mathbf1-\rho_t h_t)\odot g_t$ as in Eq.~\eqref{app:final_voc_approx}.
\STATE Advance the numerical sampler using $\epsilon_{\mathrm{vog}}$.
\ENDFOR
\STATE \textbf{return} Final sample.
\end{algorithmic}
\end{algorithm}
Architecture-specific schedules, momentum, and VJP stride extend this core as described in \Cref{subsec:practical_tricks}.

\section{Parameter Control: Target and Implementation}
\label{app:dvoc_teacher}
The local objective and its distillation gradient are given in \Cref{eq:local_control_principle,eq:dvoc_gradient}. Here we specify how DVOC constructs the target and computes the selected parameter derivatives.

\subsection{Endpoint optimization}
Let $r=\rho\|y\|_2$, $y^{(0)}=y$, and $g^{(j)}=\nabla_y\mathcal R(y^{(j)},c)$. We use normalized projected ascent:
\begin{equation}
 y^{(j+1)}=\Pi_{\mathcal B(y,r)}\!\left(
 y^{(j)}+\frac{\eta r}{J}\frac{g^{(j)}}{\|g^{(j)}\|_2+\varepsilon}
 \right),\qquad j=0,\ldots,J-1.
 \label{eq:target_noise}
\end{equation}
The Euclidean projection clips the displacement from $y$ to length $r$. We use $\varepsilon=10^{-12}$, $J=2$, $\eta=1$, $\rho=0.1$, and zero ascent momentum. The frozen FP32 decoder and reward evaluator supply the latent gradients. The final target is $y^+=\sg(y^{(J)})$.

\subsection{Cached endpoint connector}
Sample the Euler trajectory without retaining its computation graph and cache states $x_{t_i}$, velocities $\bar v_i$, and increments $h_i=\sigma_{i+1}-\sigma_i$. The endpoint is $y=x_T+\sum_i h_i\bar v_i$. Choose a size-$K$ subset $\mathcal A$ uniformly without replacement and re-evaluate its velocities with gradients enabled. Define
\begin{equation}
 \begin{aligned}
 \widehat y_\phi&=y+\sum_{i\in\mathcal A}h_i
 \left[v_\phi(\sg(x_{t_i}),t_i,c)-\sg(\bar v_i)\right],\\
 y_\phi^{\mathcal A}&=\widehat y_\phi+\sg(y-\widehat y_\phi),\\
 \mathbf J_{\mathcal A}&=\sum_{i\in\mathcal A}h_i
 \partial_\phi v_\phi(\sg(x_{t_i}),t_i,c).
 \end{aligned}
 \label{eq:dvoc_connector}
\end{equation}
All cached quantities are detached. The connector equals $y$ in the forward pass and passes parameter gradients through the selected velocity evaluations. We insert it into Eq.~\eqref{eq:distillation_loss} and optimize the LoRA parameters with AdamW. Checkpoint evaluation uses the EMA adapter.

\section{Mean Shift of CFG under the VP Probability Flow ODE}
\label{sec:cfg_span_shift}
This section derives the linearized mean shift in Eq.~\eqref{eq:cfg_final_shift}. Let $\beta_{\mathrm{vp}}(t)$ denote the VP noise rate, $\bar\alpha_t$ the cumulative signal power, and $\mu_y$ the mean of class $y$. The posterior-weighted class mean is $\bar\mu(x_t)=\sum_y p(y\mid x_t)\mu_y$.

\subsection{Class contrast and guided dynamics}
\begin{assumption}[Bayes-optimal branch mixing]
\label{ass:mmse}
The denoisers estimate conditional means, so the unconditional clean prediction satisfies $\hat x_0^{\mathrm{uc}}=\sum_y p(y\mid x_t)\hat x_0^y$.
\end{assumption}
\begin{assumption}[Locally affine class denoisers]
\label{ass:affine}
The class denoisers have the local form $\hat x_0^y(x_t)=\mu_y+M_t(x_t-\mu_y)$, with a symmetric contraction $M_t$ shared across classes.
\end{assumption}
Subtracting the unconditional mixture from the target-class prediction gives $(I-M_t)(\mu_c-\bar\mu(x_t))$. Since the clean-prediction map is affine in noise prediction, CFG induces
\begin{equation}
 \hat x_0^{\mathrm{cfg}}(x_t)=\hat x_0^{\mathrm{uc}}(x_t)
 +\omega(I-M_t)(\mu_c-\bar\mu(x_t)).
 \label{app:cfg_denoiser}
\end{equation}
Write $\Lambda_1(t)=\beta_{\mathrm{vp}}(t)\bar\alpha_t/[2(1-\bar\alpha_t)]$ and $\Lambda_2(t)=-\beta_{\mathrm{vp}}(t)\sqrt{\bar\alpha_t}/[2(1-\bar\alpha_t)]$. Substituting the denoiser-score relation into the VP probability flow ODE gives
\begin{equation}
 \frac{dx_t}{dt}=F_t(x_t)+\omega\Lambda_2(t)(I-M_t)(\mu_c-\bar\mu(x_t)),
 \quad F_t(x)=\Lambda_1(t)x+\Lambda_2(t)\hat x_0^{\mathrm{uc}}(x).
 \label{app:split_field}
\end{equation}
The ODE is integrated from noise time $T$ to clean time $0$.

\subsection{Transport of the guidance perturbation}
Let $x_t^{\mathrm{uc}}$ be the unguided trajectory with the same initial noise and $\Phi(0,\tau)$ the state transition matrix for $\partial_xF_t$ along this trajectory. Define
\begin{equation}
 \Psi(\tau)=\Phi(0,\tau)
 \frac{\beta_{\mathrm{vp}}(\tau)\sqrt{\bar\alpha_\tau}}{2(1-\bar\alpha_\tau)}
 (I-M_\tau).
 \label{app:flow_kernel}
\end{equation}
Linearizing $F_t$ around $x_t^{\mathrm{uc}}$ and applying variation of constants gives
\begin{equation}
 x_0-x_0^{\mathrm{uc}}\approx
 \omega\int_0^T\Psi(\tau)(\mu_c-\bar\mu(x_\tau))\,d\tau.
 \label{app:integral_sol}
\end{equation}
Assuming the transported contrast is integrable in expectation, taking expectations and using $\mathbb E[x_0^{\mathrm{uc}}]=\mu_{\mathrm{global}}$ gives Eq.~\eqref{eq:cfg_final_shift}. The expectation includes the path-dependent kernel. The guided class-posterior gap also depends on $\omega$.

\paragraph{Subspace interpretation.}
\label{rem:offspan_mechanism}
If every class mean lies in $\mathcal S$, the contrast lies in $\mathcal S$ on each trajectory. Its normal response is therefore determined by $P_\perp\Psi(\tau)P_\parallel$. When $M_t$ preserves $\mathcal S$, both the local filter and the unguided linearized flow preserve it, so this normal response vanishes. A learned denoiser can introduce cross-subspace coupling, whose effect depends on the transported contrast. The projection property of VOG's exact linear-MMSE correction is proved directly in \Cref{thm:geometric_invariance}.

\section{Detailed Experimental Settings}
\subsection{General Configuration and Hyperparameters}
\label{subsec:implementaion}

\begin{table}[ht]
    \centering
    \footnotesize

    \setlength{\tabcolsep}{3.5pt} 
    \renewcommand{\arraystretch}{1.1}

    \caption{Hyperparameters and settings for ImageNet-64 and ImageNet-512. The experiments are conducted using EDM2-L and EDM2-S architectures. The column $\omega$ denotes the guidance scale. The column Guidance interval specifies the range of noise levels $\sigma$ where classifier-free guidance is applied. For our proposed method (+Ours), $k$ represents the frequency of regularization steps (computing gradients every $k$ steps), and $\beta$ indicates the momentum coefficient. The parameter $s$ refers to the exponent used in Guidance cosine. The columns $\text{EMA}_m$ and $\text{EMA}_g$ indicate the EMA decay lengths for the main generative model and the guiding model, respectively.}
    \label{tab:hyperparameters_edm}

    \begin{NiceTabular}{llccccccc}
        \toprule
        Resolution & Method & $\omega$ & $K$ & $\beta$ & $s$ & \Block{}{Guidance\\interval} & $\text{EMA}_m$ & $\text{EMA}_g$ \\
        \midrule

        \Block[fill=white]{10-1}{$64 \times 64$} 
            & \textbf{EDM2-L} & & & & & & & \\
            & Classifier-free guidance & 1.15 & - & - & - & Full & 0.040 & 0.040 \\
            \rowcolor{lightCyan}
            & + $\ours$ (Ours) & 1.05 & 2 & 0.15 & - & Full & 0.040 & 0.040 \\
            & Guidance interval & 1.15 & - & - & - & (0.3,1.5] & 0.040 & 0.040 \\
            \rowcolor{lightCyan}
            & + $\ours$ (Ours) & 1.10 & 2 & 0.15 & - & (0.3,1.5] & 0.040 & 0.040 \\
            & Guidance cosine & 1.05 & - & - & 0.3 & Full & 0.040 & 0.040 \\
            \rowcolor{lightCyan}
            & + $\ours$ (Ours) & 1.05 & 2 & 0.15 & 0.3 & Full & 0.040 & 0.040 \\
            & \textbf{EDM2-S} & & & & & & & \\
            & Autoguidance & 1.70 & - & - & - & Full & 0.045 & 0.110 \\
            \rowcolor{lightCyan}
            & + $\ours$ (Ours) & 1.70 & 2 & 0.15 & - & Full & 0.045 & 0.110 \\

        \midrule

        \Block[fill=white]{7-1}{$512 \times 512$} 
            & \textbf{EDM2-S} & & & & & & & \\
            & Classifier-free guidance & 1.40 & - & - & - & Full & 0.025 & 0.025 \\
            \rowcolor{lightCyan}
            & + $\ours$ (Ours) & 1.60 & 2 & 0.15 & - & Full & 0.025 & 0.025 \\
            & Guidance interval & 2.10 & - & - & - & (0.28,2.90] & 0.025 & 0.025 \\
            \rowcolor{lightCyan}
            & + $\ours$ (Ours) & 2.30 & 2 & 0.15 & - & (0.28,2.90] & 0.025 & 0.025 \\
            & Guidance cosine & 1.60 & - & - & 0.3 & Full & 0.025 & 0.025 \\
            \rowcolor{lightCyan}
            & + $\ours$ (Ours) & 1.65 & 2 & 0.15 & 0.3 & Full & 0.025 & 0.025 \\

        \bottomrule
    \end{NiceTabular}
\end{table}

\begin{table}[ht]
    \centering
    \footnotesize
    \setlength{\tabcolsep}{3.5pt} 
    \renewcommand{\arraystretch}{1.2}

    \caption{Hyperparameters and settings for ImageNet-256 and ImageNet-512. The experiments are conducted using the DiT-XL/2 architecture. The column $\omega$ denotes the guidance scale. The column Guidance interval specifies the range of noise levels $\sigma$ where classifier-free guidance is applied. For our proposed method (+Ours), $k$ represents the frequency of regularization steps (computing gradients every $k$ steps), and $\beta$ indicates the momentum coefficient. The parameter $s$ refers to the exponent used in Guidance cosine.}
    \label{tab:hyperparameters_dit}

    \begin{NiceTabular}{llccccc}
        \toprule
        Resolution & Method & $\omega$ & $K$ & $\beta$ & $s$ & \Block{}{Guidance\\interval} \\
        \midrule

        \Block[fill=white]{6-1}{$256 \times 256$} 
            & Classifier-free guidance & 1.50 & - & - & - & Full \\
            \rowcolor{lightCyan}
            & + $\ours$ (Ours) & 1.50 & 4 & 0.15 & - & Full \\
            & Guidance interval & 1.50 & - & - & - & (0.34, 1.02] \\
            \rowcolor{lightCyan}
            & + $\ours$ (Ours) & 1.90 & 2 & 0.15 & - & (0.34, 1.02] \\
            & Guidance cosine & 1.50 & - & - & 0.3 & Full \\
            \rowcolor{lightCyan}
            & + $\ours$ (Ours) & 1.50 & 2 & 0.15 & 0.3 & Full \\

        \midrule

        \Block[fill=white]{6-1}{$512 \times 512$} 
            & Classifier-free guidance & 1.50 & - & - & - & Full \\
            \rowcolor{lightCyan}
            & + $\ours$ (Ours) & 1.50 & 2 & 0.15 & - & Full \\
            & Guidance interval & 2.50 & - & - & - & (0.34, 1.02] \\
            \rowcolor{lightCyan}
            & + $\ours$ (Ours) & 2.50 & 2 & 0.15 & - & (0.34, 1.02] \\
            & Guidance cosine & 1.50 & - & - & 0.3 & Full \\
            \rowcolor{lightCyan}
            & + $\ours$ (Ours) & 1.50 & 2 & 0.20 & 0.3 & Full \\

        \bottomrule
    \end{NiceTabular}
\end{table}

The VOG class-conditional sampling experiments use JAX. The DVOC and baseline post-training comparison uses PyTorch. All FID scores use 50k generated samples per configuration. Sampling parameters follow standard practice for each architecture: for EDM2-S we use an evaluation batch size of 8 per device. For DiT models we use 250 sampling steps with a per-process batch size of 8 and the MSE-tuned VAE from the official Stable Diffusion repository.

We tune the guidance weight $\omega$ separately for each architecture and task. The specific hyperparameter configurations and the chosen $\omega$ ranges are detailed in~\Cref{tab:hyperparameters_edm,tab:hyperparameters_dit}. All class-conditional generation results were obtained using eight NVIDIA RTX 4090 GPUs. The DVOC comparison uses eight NVIDIA L20X GPUs per run and evaluates the checkpoint after 20 updates.

\subsection{Structure of the 2D Synthetic Distributions}
The branching-tree datasets are Gaussian mixtures. \Cref{tab:synthetic_metrics} summarizes their component weights and spatial spread. The entropy-based effective count $K_{\mathrm{ent}}$ is the exponential of the weight entropy, and the HHI-based count $K_{\mathrm{hhi}}$ is the inverse sum of squared weights. Top 5\% Mass is the total weight of the largest 5\% of components. The covariance trace measures total spatial variance, and MST is the total Euclidean length of the minimum spanning tree connecting component means.

\begin{table}[h]
    \centering
    \caption{Quantitative difficulty metrics for the 2D synthetic datasets (Class A).}
    \label{tab:synthetic_metrics}
    \setlength{\tabcolsep}{5pt}
    \begin{tabular}{lcccccc}
        \toprule
        \textbf{Dataset} & $\bm{K}$ & $\bm{K_{\text{ent}}}$ & $\bm{K_{\text{hhi}}}$ & \textbf{Top 5\% Mass} & $\bm{\text{tr}(\text{Cov})}$ & \textbf{MST} \\
        \midrule
        CLS-2 (Class A) & 1016 & 249.81 & 90.29  & 0.5429 & 0.2275 & 15.35 \\
        CLS-3 (Class A) & 1036 & 214.05 & 102.91 & 0.6361 & 0.4029 & 18.57 \\
        \bottomrule
    \end{tabular}
\end{table}
\noindent As indicated in \Cref{tab:synthetic_metrics}, the \textbf{CLS-3} dataset is the harder of the two. Geometrically, CLS-3 exhibits a much larger spatial dispersion, evidenced by the higher total variance ($\text{tr}(\text{Cov}) \approx 0.40$ vs. $0.23$) and a more extensive Minimum Spanning Tree ($18.57$ vs. $15.35$). Statistically, the distribution of CLS-3 is more skewed. Despite having a similar number of total components, its probability mass is more concentrated in the top few modes (Top 5\% Mass $\approx 63.6\%$), resulting in a lower entropy-based effective number of components ($K_{\text{ent}}$). This combination of high spatial variance and weight imbalance requires the generative model to capture a broader manifold structure while handling sharper density peaks.

\subsection{DVOC and Baseline Reproduction Details}
\label{subsec:dvoc_protocol}
The main DVOC configuration uses $K=3$, with $K=1,2$ as active-step ablations. We compare against the untrained Z-Image-Turbo reference and three trained methods: ReFL, DRTune, and DiffusionOPSD. All methods use shared prompts and initial noise for paired evaluation.

\paragraph{Hardware and update accounting.}
Each trained main method and both DVOC $K$ ablations use one eight-GPU node, with reward models colocated on the same devices. The untrained reference uses the same hardware for generation and scoring.

One \emph{training step} denotes one optimizer update, including its generated trajectories, reward or target computation, accumulated backward passes, and parameter update. It is distinct from an Euler denoising step. The effective trajectory batch is $8\times1\times72=576$, comprising 48 prompt groups with 12 trajectories each. Thus update 20 corresponds to 11,520 training trajectories. ReFL and DRTune stop their training rollouts at the selected late states. Their separate 512-prompt reward calibration is initialization overhead, outside this per-update trajectory count. The configured runs continue to 100 updates; the main table evaluates their update-20 EMA checkpoints.

\begin{table}[ht]
\centering\small
\caption{Method-specific training settings. A differentiated evaluation carries a parameter gradient.}
\label{tab:dvoc_method_settings}
\begin{tabular}{@{}>{\raggedright\arraybackslash}p{.15\linewidth}>{\raggedright\arraybackslash}p{.20\linewidth}>{\raggedright\arraybackslash}p{.43\linewidth}>{\raggedright\arraybackslash}p{.12\linewidth}@{}}
\toprule
Method & Backward path & Reward target or loss & Weight decay \\
\midrule
ReFL & One selected late-state evaluation & Negative standardized reward; loss scale $10^{-3}$; no hinge & $10^{-2}$ \\
DRTune & Three connected evaluations in a truncated rollout & Same reward loss and scale as ReFL; early-stop ratio 0.4 & $10^{-2}$ \\
DiffusionOPSD & One queried state & Positive and negative PGD targets; old-policy mixing; $J=2$, $\rho=0.1$, $\eta=1$ & $10^{-4}$ \\
\rowcolor{blue!7}
DVOC ($K=3$) & Three cached states; detached state dependencies & Actual-endpoint PGD target; $J=2$, $\rho=0.1$, $\eta=1$; normalized target MSE & $10^{-4}$ \\
\bottomrule
\end{tabular}
\end{table}

\paragraph{DVOC settings.}
The active indices are sampled uniformly without replacement from all nine Euler steps; no $N/K$ correction rescales their sum. PGD uses two normalized ascent steps, radius $0.1\|y\|_2$, step multiplier 1, zero momentum, and target-construction microbatch size one. The actual endpoint, optimized target, and cached rollout states are detached. The squared fitting residual is divided by its detached mean absolute displacement, floored at $10^{-5}$. Rollout generation uses three trajectories per GPU call and 24 calls per update; fitting uses 72 one-trajectory microbatches. The $K=1,2$ ablations change only the active-step count.

\paragraph{ReFL and DRTune settings.}
Both use a fixed 512-prompt training-set calibration to standardize the mixed reward, followed by a negative-reward loss with scale $10^{-3}$ and no hinge, auxiliary pretraining loss, or reference KL/MSE. ReFL samples one evaluation state in the final 25\% of the nine-step schedule. DRTune uses three connected trainable steps with an early-stop ratio of 0.4. The rollout prefix is computed without gradients and the unused suffix is omitted. The two methods share random draws for prompts, noise, state selection, and calibration.

\paragraph{DiffusionOPSD settings.}
DiffusionOPSD generates 24 batches of three trajectories per GPU and uses prompt-group advantages clipped at 5, with both advantage signs enabled and policy-mixing coefficient 1. OPSD forms ascent and descent targets at the old policy's one-step clean prediction from a queried rollout state. Its requested query noise level is 0.273, mapped to the available schedule. Each target uses two normalized steps, relative radius 0.1, multiplier 1, zero momentum, and microbatch one; both targets are accepted without reward certification. It fits one queried state and applies no auxiliary reference-prediction penalty.

\paragraph{Training time measurement.}
For the four main methods, we measure elapsed time between consecutive optimizer updates. \Cref{tab:dvoc_training_time} reports means and standard deviations over 18 intervals from updates 2--20, excluding update 11 because it follows evaluation and checkpointing. The intervals include sampling, reward or target construction, fitting, and routine communication overhead. Initialization, periodic evaluation and checkpointing, and the separate offline evaluation are excluded.

For $K=1,2$, the times are estimated from consecutive rollout-completion records and marked $\dagger$. Each interval spans one fitting phase and the following rollout. Applying the same estimator to $K=3$ reproduces its measured mean of 458.0\,s over the same 18 intervals, with a maximum per-interval difference of 1.4\,s.

\begin{table}[ht]
\centering\small
\caption{Training time per optimizer update on one eight-GPU node. Means and standard deviations use 18 intervals from updates 2--20, excluding update 11 because it follows evaluation and checkpointing. Each update uses 576 training trajectories. Standard deviations describe timing variation within a single run. $\dagger$ denotes estimates from rollout-completion intervals; other rows use optimizer-completion intervals.}
\label{tab:dvoc_training_time}
\begin{tabular}{lrrrr}
\toprule
Method & GPUs & Seconds / update & Minutes / update & GPU-min / update \\
\midrule
ReFL & 8 & $272.8\pm1.0$ & 4.55 & 36.4 \\
DRTune & 8 & $336.8\pm1.3$ & 5.61 & 44.9 \\
DiffusionOPSD & 8 & $397.6\pm0.8$ & 6.63 & 53.0 \\
DVOC ($K=1$)$^\dagger$ & 8 & $325.8\pm0.6$ & 5.43 & 43.4 \\
DVOC ($K=2$)$^\dagger$ & 8 & $391.2\pm0.4$ & 6.52 & 52.2 \\
\rowcolor{blue!7}
DVOC ($K=3$) & 8 & $458.0\pm0.5$ & 7.63 & 61.1 \\
\bottomrule
\end{tabular}
\end{table}

The same exclusion rule over the 100-update runs gives mean times of 272.9\,s (ReFL), 337.2\,s (DRTune), 396.8\,s (DiffusionOPSD), and 456.5\,s (DVOC), using 90 intervals per method.

\paragraph{Scorer definitions.}
All methods use the same evaluator weights and preprocessing. HPSv3++ uses iteration condition 0.3. Q-Align quality and aesthetics are evaluated with OneAlign, and DeQA uses DeQA-Score-Mix3. These two models use half-precision inference with batch size one. AES and Q-Align aesthetics are distinct evaluators and appear in separate columns.

\paragraph{Evaluation setup.}
Evaluation uses the EMA adapter after 20 optimizer updates and the original Z-Image-Turbo weights as the untrained reference. Each method generates images for the complete 1,000-prompt test set, and all seven evaluators score the same images. Online evaluations use a different noise sampling scheme and HPSv3++ condition 0.0; their results are reported separately in \Cref{app:online_evaluation}.

\paragraph{Scope of the comparison.}
The trained rows compare fixed update-20 checkpoints from runs configured for a longer training schedule. They share a nominal generated-sample and optimizer-update budget but differ in gradient paths, reward calls, and targets, giving different per-update computational costs. The main table's gray differences are relative to the paired untrained original model. The in-domain/OOD split specifies evaluator use during optimization.

\paragraph{Paired uncertainty.}
For paired metric differences $d_i$, \Cref{tab:dvoc_paired} reports $\bar d\pm1.96s_d/\sqrt{1000}$. These per-metric intervals summarize variation across the fixed 1,000 prompt/image pairs.
\begin{table}[ht]
\centering
\caption{Paired differences: default DVOC ($K=3$) minus untrained Z-Image-Turbo. Normal-approximation 95\% intervals use 1,000 paired prompts and are reported per metric.}
\label{tab:dvoc_paired}
\begin{tabular}{lrl}
\toprule
Metric & Mean difference & 95\% interval \\
\midrule
HPSv3++ & +7.06357 & [+6.90034, +7.22679] \\
CLIP & +0.00077 & [-0.00142, +0.00297] \\
Q-Align Q. & +0.11801 & [+0.10556, +0.13046] \\
Q-Align A. & +1.08532 & [+1.05159, +1.11906] \\
AES & +0.87600 & [+0.84961, +0.90238] \\
ImageReward & +0.31966 & [+0.28264, +0.35668] \\
DeQA & -0.20233 & [-0.21682, -0.18784] \\
\bottomrule
\end{tabular}
\end{table}

\subsection{Endpoint-Target Ablations}
\label{app:dvoc_target_ablation}
We compare three target constructions at $K=3$: the actual endpoint with two ascent steps ($J=2$), the actual endpoint with one step ($J=1$), and a late-state clean-prediction proxy with two steps. These paired runs share the initial LoRA adapter and random streams for prompts, initial noise, active states, and reward evaluations. Their common initialization is separate from the main experiment.

\paragraph{Target construction.}
The endpoint target starts from the terminal latent $y$. The proxy starts from $a=x_{t_q}-\sigma_qv_\phi(x_{t_q},t_q)$ at the schedule point nearest the requested noise level $0.273$, giving $q=7$ and $\sigma_q\simeq0.3050$. Its optimized displacement is transferred to the endpoint as $y+(a^+-a)$. All variants use radius $0.1\|y\|_2$, step multiplier $\eta=1$, zero momentum, and ascent-step length $r/J$.

\paragraph{Evaluation and results.}
Each run uses 20 updates, 576 trajectories per update, and the shared optimizer configuration. The update-20 EMA adapters generate the same 1,000 test prompts with matched seeds and initial latents. All seven evaluators score these saved images.

Two ascent steps improve HPSv3++, Q-Align quality, AES, and DeQA relative to one step. The gains in Q-Align quality, AES, and DeQA are $0.0202$, $0.0692$, and $0.1203$, respectively. CLIP is higher for $J=1$. Relative to the proxy, the two-step endpoint target improves Q-Align quality, Q-Align aesthetics, and DeQA by $0.0125$, $0.0215$, and $0.0100$. The complete score profile is shown in \Cref{tab:dvoc_target_ablation}.

\paragraph{Computation.}
On the eight-GPU node, mean update times are $461.7$ seconds for endpoint $J=2$, $414.1$ seconds for endpoint $J=1$, and $463.0$ seconds for proxy $J=2$. These measurements use updates 2--20 and exclude evaluation and checkpoint I/O. One target step reduces update time by $10.3\%$.

\section{Additional Experimental Results}
\label{appsec:additional_exp_result}
We provide supplementary quantitative and qualitative evaluations to further validate the behavior of the evaluated guidance and post-training settings.

\noindent\textbf{Qualitative comparison of reward-based adaptation.}
\Cref{fig:dvoc_qualitative} compares DVOC with the base model and the three trained baselines across four scenes. These examples complement the evaluator scores in \Cref{tab:t2i_results} by showing how adaptation affects the visual appearance of generated images.
\begin{figure}[!htbp]
\centering
\includegraphics[width=\textwidth]{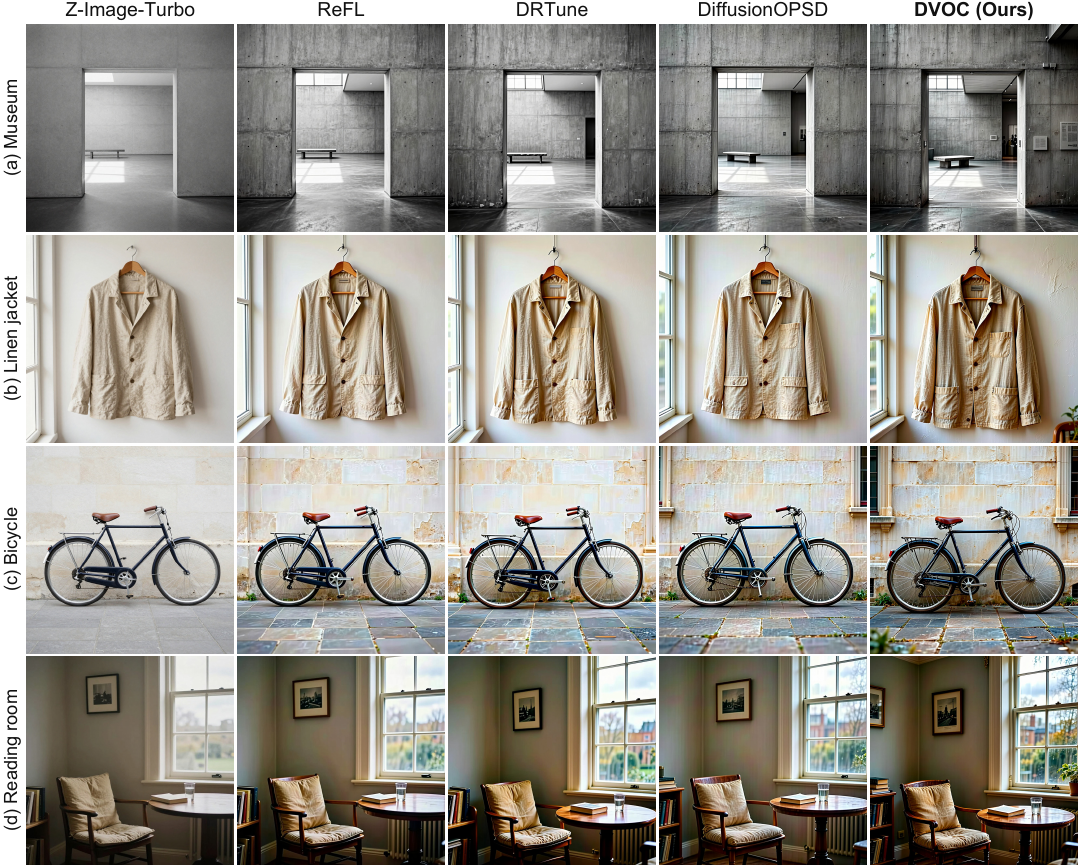}
\caption{\textbf{Qualitative comparison of reward-based adaptation.} Columns show Z-Image-Turbo, ReFL, DRTune, DiffusionOPSD, and DVOC. Rows show (a) a museum interior, (b) a linen jacket, (c) a bicycle, and (d) a reading room. The examples illustrate differences in material texture, lighting, and local detail across methods.}
\label{fig:dvoc_qualitative}
\end{figure}

\noindent\textbf{Computational efficiency.} \Cref{tab:performance_comparison} details the computational overhead of $\ours$. Due to the inclusion of vector-Jacobian products, $\ours$ increases runtime by $1.46\times$ to $1.98\times$ and peak GPU memory by $1.21\times$ to $1.72\times$ across varying resolutions. Despite this overhead, $\ours$ remains practically feasible, with the per-device sampling benchmark fitting on one NVIDIA RTX 4090 GPU. DVOC training uses the separate allocation reported above.

\begin{table}[ht]
\centering
\caption{Runtime and peak memory of $\ours$ relative to CFG, reported as CFG / $\ours$ with the ratio in parentheses. Res/BS denotes resolution and batch size; all measurements are taken on a single NVIDIA RTX 4090.}
\label{tab:performance_comparison}
\begin{tabular}{lcc}
\toprule
Model    & Res/BS & CFG/ $\ours$ Runtime (sec)           \\ \midrule
EDM2-S   & 64/8   & 3.66 / 5.33 (1.46x)\\
DiT-XL/2 & 256/8  & 10.97 / 17.04 (1.55x)\\
EDM2-S   & 512/8  & 4.66 / 9.22 (1.98x)\\
DiT-XL/2 & 512/8  & 39.74 / 65.03 (1.64x)\\ \midrule \midrule
Model    & Res/BS & CFG/ $\ours$ Memory (GB)             \\ \midrule
EDM2-S   & 64/1   & 1.18 / 1.79 (1.53x)\\
DiT-XL/2 & 256/1  & 4.14 / 4.99 (1.21x)\\
EDM2-S   & 512/1  & 1.19 / 1.81 (1.53x)\\
DiT-XL/2 & 512/1  & 4.18 / 7.18 (1.72x)            \\ \bottomrule
\end{tabular}
\end{table}

\noindent\textbf{Performance in few-step regimes (\Cref{tab:steps_25}).}
\Cref{tab:steps_25} compares 25-step sampling performance on DiT and SiT architectures.
For SiT, although the model is trained to predict the interpolant velocity field $v_\theta(x,t)$, we follow the diffusion-SDE sampling setting by converting the learned velocity into the corresponding score:
\begin{equation}
s_\theta(x,t)
=
\frac{1}{\sigma_t}
\frac{
\alpha_t v_\theta(x,t)-\dot{\alpha}_t x
}{
\dot{\alpha}_t\sigma_t-\alpha_t\dot{\sigma}_t
},
\end{equation}
where $\alpha_t$ and $\sigma_t$ denote the SiT interpolant coefficients. We then apply CFG and $\ours$ on this converted score and use the same diffusion SDE sampler for evaluation.
$\ours$ improves FID on both architectures, from $7.58$ to $4.44$ on DiT-XL/2 and from $6.20$ to $3.89$ on SiT-XL/2.
The two architectures trade Precision against Recall in opposite directions: on DiT-XL/2 the gain comes from coverage (Recall $0.429\!\rightarrow\!0.545$, Precision $0.784\!\rightarrow\!0.699$), whereas on SiT-XL/2 it comes from sample quality (Precision $0.685\!\rightarrow\!0.735$, Recall $0.555\!\rightarrow\!0.516$).
These configurations also use different tuned guidance scales ($\omega{=}1.50$ for DiT-XL/2, $\omega{=}1.10$ for SiT-XL/2). In both cases $\ours$ moves the operating point toward a better FID with architecture-dependent Precision--Recall trade-offs.
At 50 steps, DiT-XL/2 FID decreases from $3.69$ to $2.99$. The relative reduction therefore grows from $19\%$ at 50 steps to $41\%$ at 25 steps.

\begin{table}[ht]
    \centering
    \footnotesize
    \caption{Quantitative comparison of 25-step sampling on DiT-XL/2 and SiT-XL/2 ($256\times256$, ImageNet). We report FID ($\downarrow$), Recall ($\uparrow$), and Precision ($\uparrow$), together with the guidance scale $\omega$, the momentum coefficient $\beta$, and the VJP recomputation interval $K$.}
    \label{tab:steps_25}

    \begin{NiceTabular}{llcccccc}
        \toprule
        \textbf{Model} & \textbf{Method} & \textbf{FID} & \textbf{Recall} & \textbf{Precision} & \textbf{$\omega$} & \textbf{$\beta$} & \textbf{$K$} \\
        \midrule
        
        \Block[fill=white]{2-1}{DiT-XL/2} 
            & CFG & 7.58 & 0.429 & 0.784 & 1.50 & - & - \\
            \rowcolor{lightCyan}
            & $\ours$ & 4.44 & 0.545 & 0.699 & 1.50 & 0.15 & 2 \\
            
        \midrule
        
        \Block[fill=white]{2-1}{SiT-XL/2} 
            & CFG & 6.20 & 0.555 & 0.685 & 1.50 & - & - \\
            \rowcolor{lightCyan}
            & $\ours$ & 3.89 & 0.516 & 0.735 & 1.10 & 0.15 & 2 \\
            
        \bottomrule
    \end{NiceTabular}
    
\end{table}

\noindent\textbf{Quantitative analysis on 2D synthetic benchmarks.} \Cref{tab:toy-merged} provides the numerical evaluation corresponding to the 2D branching-tree visualizations in~\Cref{fig:toy_example}. While standard CFG effectively filters outliers (high Precision), it loses coverage as the distribution complexity increases, evidenced by the drop in Recall to $0.7785$ on the three-class dataset. $\ours$ achieves higher F1-scores and lower Wasserstein distances in both settings, indicating that it aligns samples with the conditional target while better preserving the support of the data manifold.
\begin{table}[!htbp]
  \centering
  \caption{\textbf{Quantitative results on 2D branching-tree generation.} We report Precision, Recall, F1-score (higher is better), and Wasserstein distance (lower is better) for both Two-class and Three-class settings. Best results (excluding Ground Truth) are highlighted in \textbf{bold}.}
  \label{tab:toy-merged}
  \setlength{\tabcolsep}{3.5pt}
  \begin{tabular}{l cccc c cccc}
    \toprule
    & \multicolumn{4}{c}{\textbf{Two-class (CLS=2)}} & & \multicolumn{4}{c}{\textbf{Three-class (CLS=3)}} \\
    \cmidrule(lr){2-5} \cmidrule(lr){7-10}
    {Method} & {Prec.} & {Rec.} & {F1} & {Wass.} & & {Prec.} & {Rec.} & {F1} & {Wass.} \\
    \midrule
    Ground Truth & 0.9916 & 1.0000 & 0.9958 & 0.0021 & & 0.9862 & 1.0000 & 0.9931 & 0.0026 \\
    \midrule
    No Guidance  & 0.8948 & \textbf{0.9989} & \textbf{0.9440} & \textbf{0.0031} & & 0.9194 & \textbf{0.9970} & \textbf{0.9567} & \textbf{0.0035} \\
    Standard CFG & 0.9110 & 0.8755 & 0.8929 & 0.0064 & & 0.9392 & 0.7785 & 0.8514 & 0.0071 \\
    \rowcolor{lightCyan}
    $\ours$ (ours)     & \textbf{0.9230} & 0.9568 & 0.9396 & 0.0043 & & \textbf{0.9536} & 0.9328 & 0.9431 & 0.0042 \\
    \bottomrule
  \end{tabular}
\end{table}

\noindent\textbf{Extended sensitivity analysis.}
\begin{figure*}[htbp]
    \centering
    \begin{subfigure}[t]{0.55\linewidth}
        \includegraphics[width=\linewidth]{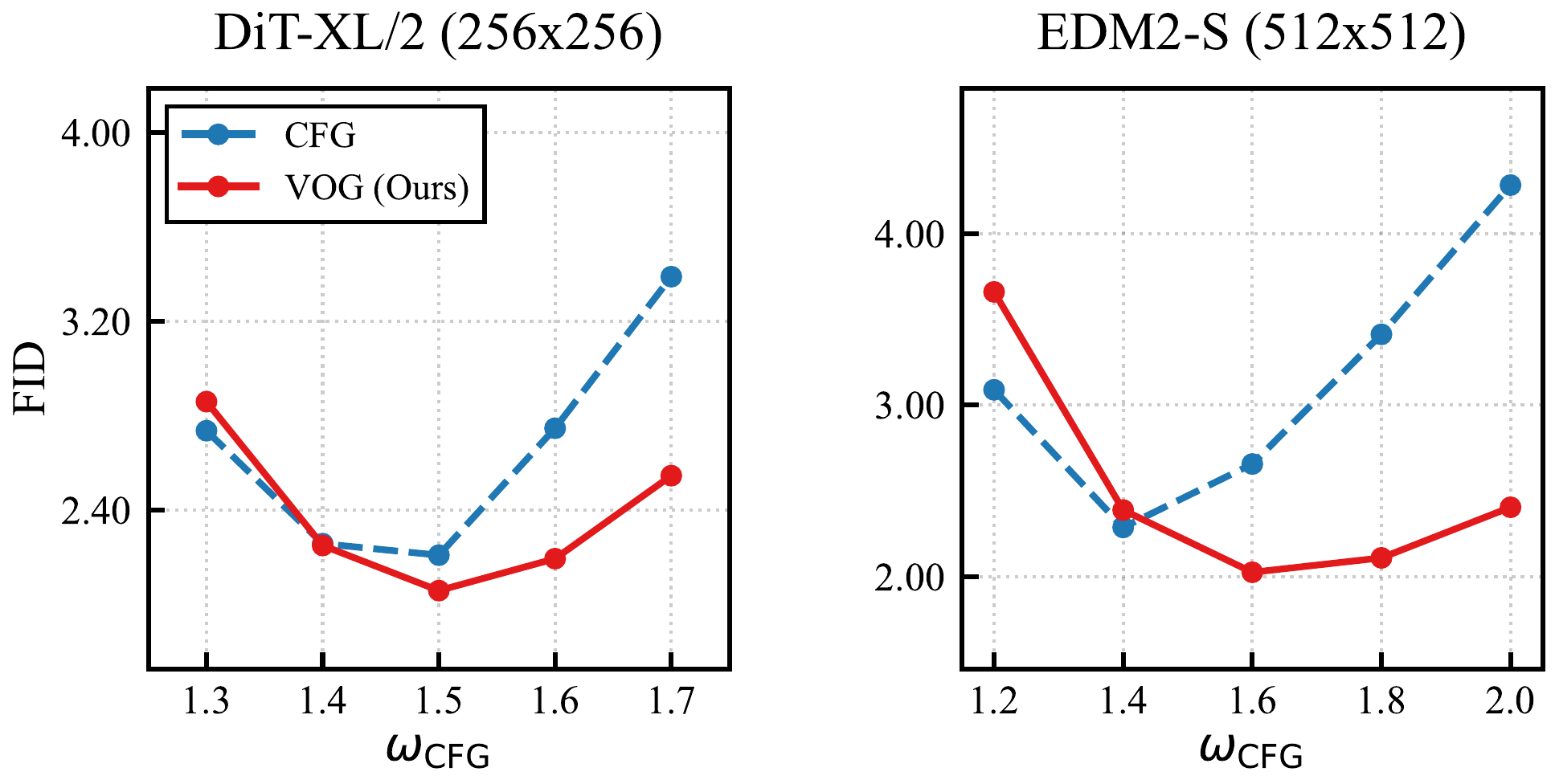}
        \caption{FID as a function of the guidance scale $\omega_{\text{CFG}}$ for DiT-XL/2 ($256\times256$) and EDM2-S ($512\times512$).}
        \label{fig:ablation_omega}
    \end{subfigure}
    \hfill
    \begin{subfigure}[t]{0.42\linewidth}
        \includegraphics[width=\linewidth]{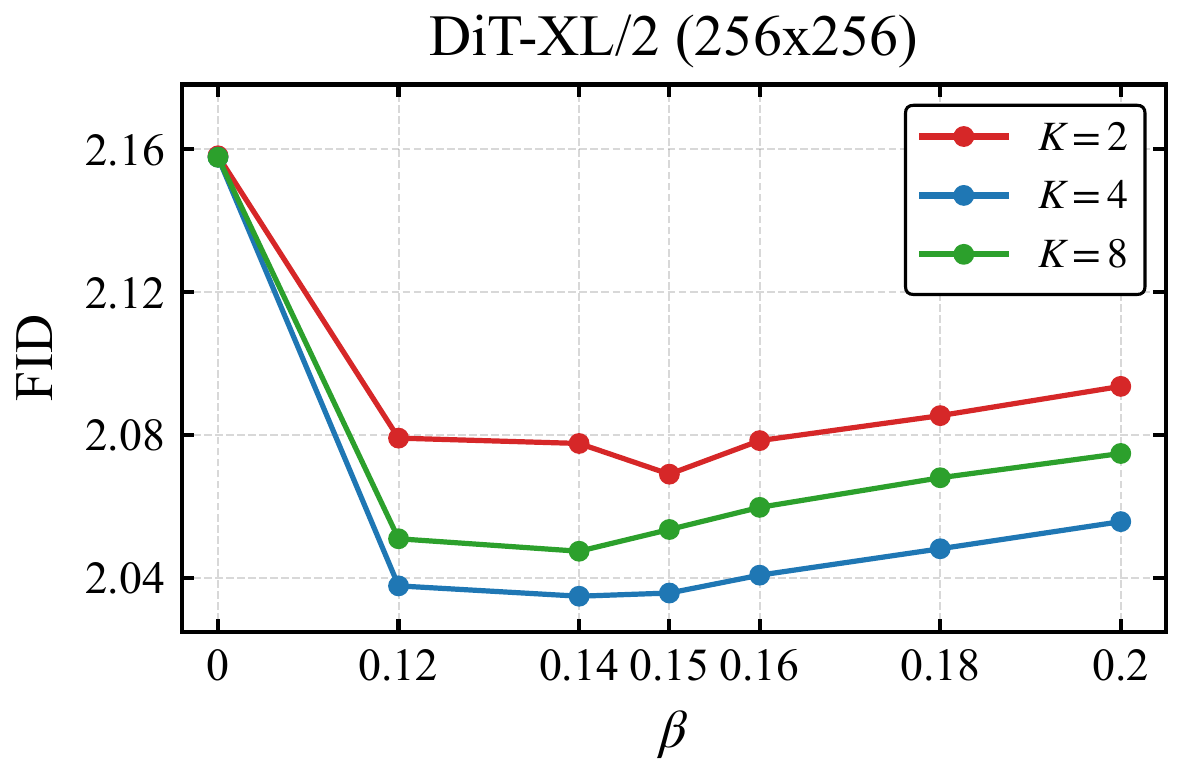}
        \caption{FID as a function of the momentum coefficient $\beta$ and the VJP recomputation interval $K$, on DiT-XL/2 ($256\times256$).}
        \label{fig:ablation_momentum}
    \end{subfigure}
    \caption{Ablation studies on guidance-scale sensitivity (left) and hyperparameter robustness (right).}
\end{figure*}

\Cref{fig:ablation_edm2,fig:ablation_dit} supplement the main text with guidance-scale sweeps on EDM2-S and DiT-XL/2. Across the sweep, $\ours$ mitigates the coverage loss that standard CFG exhibits at high guidance scales (e.g., maintaining Recall $\approx 0.58$ where CFG drops below $0.50$) and avoids the accompanying FID spike, which widens the range of $\omega$ over which guidance remains usable.
\begin{figure}[!htbp]
    \centering
    \includegraphics[width=1\linewidth]{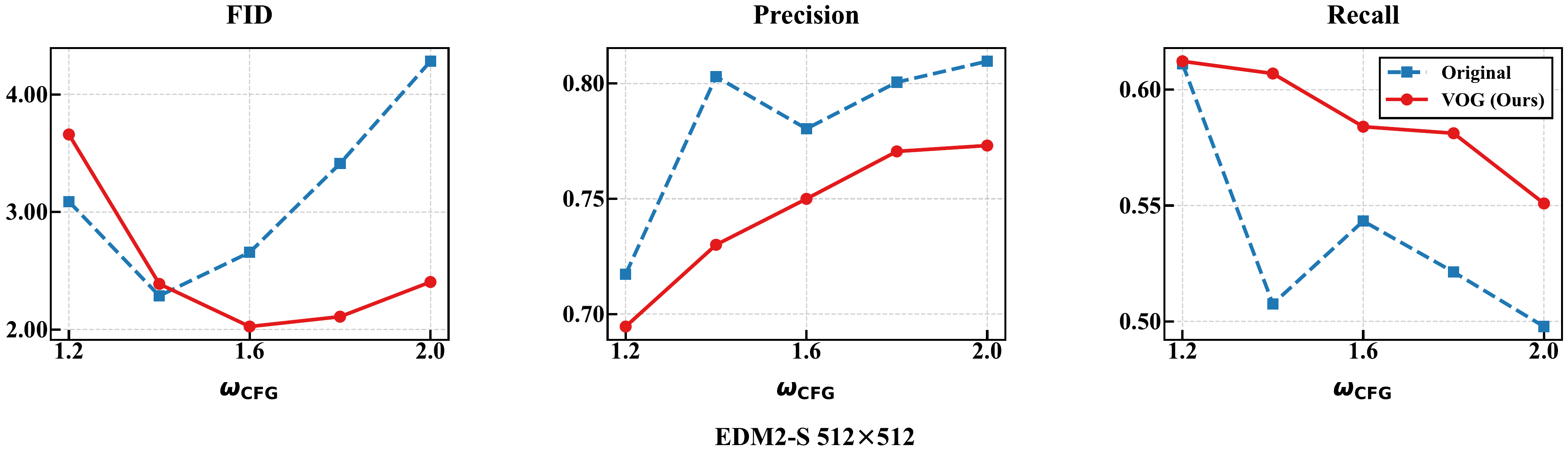}
\caption{\textbf{Quantitative comparison on EDM2-S ($512\times512$) across varying guidance scales.} We report FID ($\downarrow$), Precision ($\uparrow$), and Recall ($\uparrow$) as a function of the guidance weight $\omega_{\text{CFG}}$. VOG (red) demonstrates superior robustness at high scales compared to standard CFG (blue), maintaining an FID of $\approx 2.4$ at $\omega=2.0$ while the baseline degrades beyond 4.0.}
\label{fig:ablation_edm2}
\end{figure}

\begin{figure}[!htbp]
    \centering
    \includegraphics[width=1\linewidth]{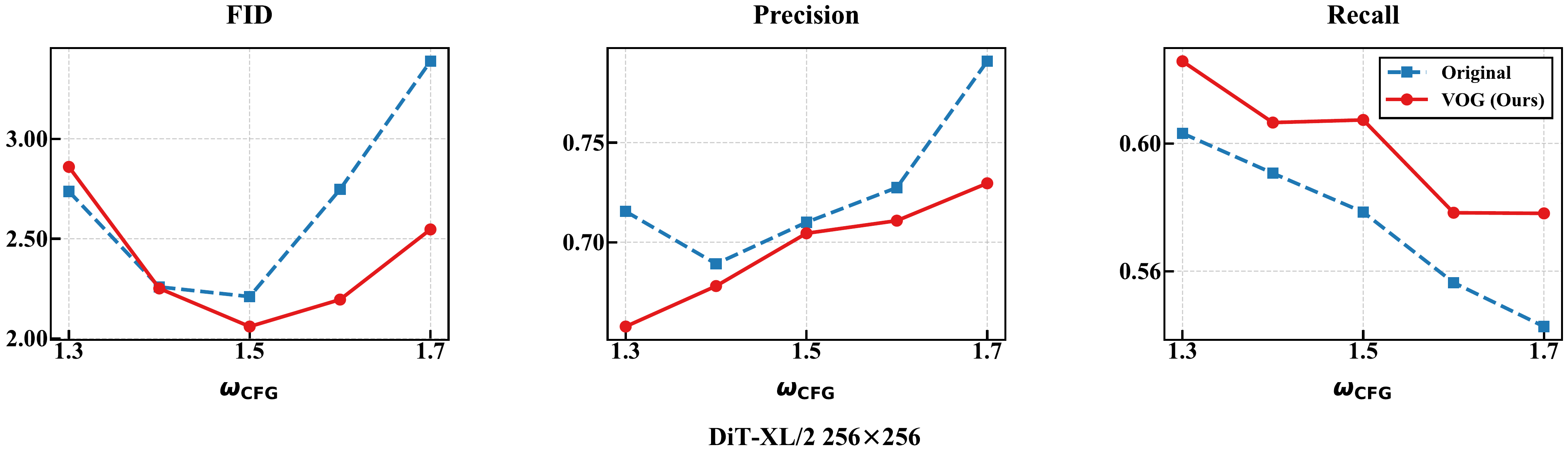}
\caption{\textbf{Quantitative comparison on DiT-XL/2 ($256\times256$) across varying guidance scales.} While standard CFG suffers from sharp performance degradation at high guidance strengths (e.g., FID $>3.40$ at $\omega=1.7$), VOG maintains structural fidelity (high Recall) and generation quality (FID $<2.60$), effectively preventing manifold drift.}
\label{fig:ablation_dit}
\end{figure}

\noindent\textbf{Composition with other guidance heuristics.}
$\ours$ is complementary to existing heuristic schedules rather than a replacement for them. Applied on top of Cosine, Interval, and AutoGuidance (\Cref{tab:edm2_s_ag}), it improves FID in every case, for instance reducing Interval CFG from $1.637$ to $1.523$ on EDM2-L, which supports its use as a plug-in guidance correction.

\begin{table}[!htbp]
    \centering

    \caption{$\ours$ composed with different guidance heuristics on ImageNet $64\times64$. Each shaded row applies $\ours$ on top of the heuristic immediately above it.}
    \begin{tabular}{lccc}
        \toprule
        Benchmark & FID & Precision & Recall \\
        \midrule

        \multicolumn{4}{l}{\textbf{EDM2-S}} \\ 
        \quad +AutoGuidance & 1.044 & 0.701 & 0.626 \\
        \rowcolor{lightCyan}
        \quad \quad +$\ours$ (Ours) & 1.031 & 0.696 & 0.620 \\

        \addlinespace[1.5ex]

        \multicolumn{4}{l}{\textbf{EDM2-L}} \\
        \quad +Standard CFG & 1.993 & 0.710 & 0.601 \\
        \rowcolor{lightCyan}
        \quad \quad +$\ours$ (Ours) & 1.418 & 0.714 & 0.597 \\
        
        \addlinespace[0.5ex]
        \quad +Cosine CFG & 1.667 & 0.726 & 0.587 \\
        \rowcolor{lightCyan}
        \quad \quad +$\ours$ (Ours) & 1.649 & 0.727 & 0.588 \\
        
        \addlinespace[0.5ex]
        \quad +Interval CFG & 1.637 & 0.730 & 0.597 \\
        \rowcolor{lightCyan}
        \quad \quad +$\ours$ (Ours) & 1.523 & 0.720 & 0.593 \\
        
        \bottomrule
    \end{tabular}
    \label{tab:edm2_s_ag}
\end{table}

\FloatBarrier
\section{Reward and Evaluation Histories}
\label{app:training_curves}
We report 100-update training runs for ReFL, DRTune, DiffusionOPSD, and DVOC ($K=3$), with $K=1,2$ confined to the ablation. The primary seven-metric comparison remains the fixed, paired update-20 evaluation in the main paper.

\paragraph{Training reward and evaluation reward.}
The training reward is $0.03846R_{\mathrm{HPSv3++}}+R_{\mathrm{CLIP}}$. ReFL and DRTune score one-step clean predictions from sampled late states, whereas DVOC and DiffusionOPSD score complete sampled endpoints. These quantities are plotted in separate panels in \Cref{fig:dvoc_reward_curves}; the two upper panels share their vertical scale. The lower panels show the HPSv3++ and CLIP components for DVOC and DiffusionOPSD.

All raw update means are shown as thin, translucent lines. The stronger line is a trailing mean over at most five updates, using the same window for every method. The horizontal axis indicates the optimizer update associated with each training batch, from 1 to 100. Each point is the mean reward on that batch.

\begin{figure}[!htbp]
\centering
\includegraphics[width=\textwidth]{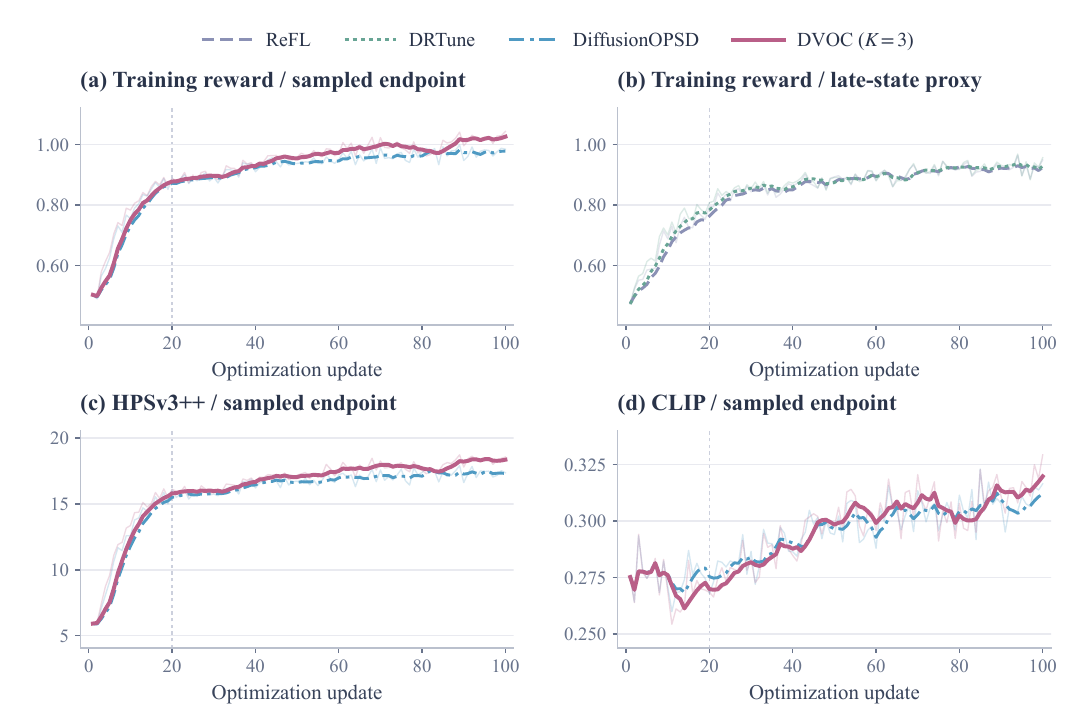}
\caption{\textbf{Training reward histories.} Separate panels distinguish sampled endpoints from late-state proxies. Thin lines show every raw update mean; strong lines show the trailing five-update mean. The vertical reference marks update 20. The HPSv3++ and CLIP panels show DVOC and DiffusionOPSD.}
\label{fig:dvoc_reward_curves}
\end{figure}

\subsection{Online Evaluation}
\label{app:online_evaluation}
We evaluate each method at initialization and every ten updates through update 100, giving 11 checkpoints. Every evaluation uses the EMA adapter, the same 1,000 prompts, $1024^2$ outputs, nine sampling steps, guidance scale zero, and HPSv3++ iteration condition 0.0. Online evaluations use each trainer's noise sampling scheme; the main comparison pairs initial noise across methods and uses HPSv3++ condition 0.3.

\Cref{fig:dvoc_eval_curves} reports HPSv3++ and CLIP as in-domain metrics, alongside the held-out ImageReward evaluator. Markers show measured checkpoint means, with straight lines connecting successive evaluations. The first two panels of \Cref{fig:dvoc_training_overview} show the same CLIP and HPSv3++ histories. The combined evaluation reward is $0.03846\,\mathrm{HPSv3++}+\mathrm{CLIP}$.

For DVOC, the online weighted evaluation reward rises from 0.766 at update 20 to 0.911 at update 100. ImageReward rises from 1.267 to 1.366 over the same interval. 

\begin{figure}[!htbp]
\centering
\includegraphics[width=\textwidth]{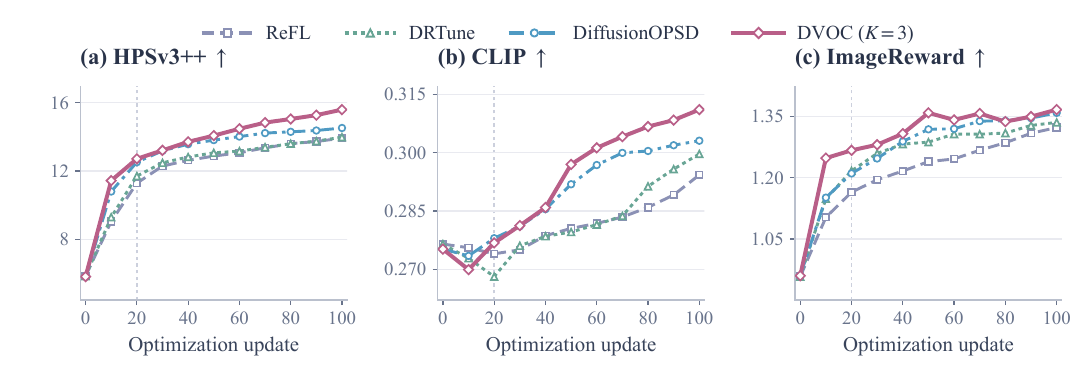}
\caption{\textbf{Online evaluation histories.} HPSv3++ and CLIP are in-domain evaluators; ImageReward is held out of training. Each marker is a checkpoint mean, with no smoothing. The vertical line marks update 20. The HPSv3++ condition and noise sampling differ from the paired evaluation in the main table.}
\label{fig:dvoc_eval_curves}
\end{figure}

\subsection{Active-Step Histories}
\label{app:k_histories}
\Cref{fig:dvoc_k_curves} complements the fixed update-20 ablation with all recorded histories for $K=1,2,3$. Each run uses the same nine-step sampler and two-step PGD target construction. Panel (a) shows training reward with the same raw-line and trailing-mean convention as \Cref{fig:dvoc_reward_curves}; panels (b)--(f) show unsmoothed online EMA evaluation. The three variants have exactly matching initialization scores on all four displayed evaluators.

\begin{figure}[!htbp]
\centering
\includegraphics[width=\textwidth]{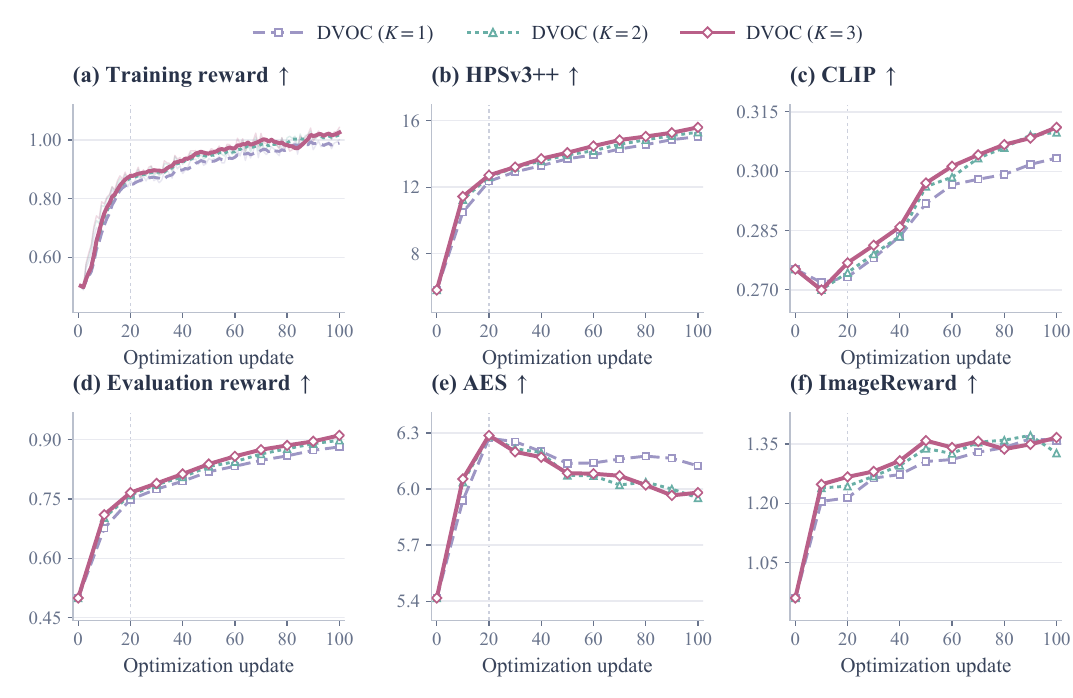}
\caption{\textbf{DVOC active-step ablation over optimization.} $K=1,2$ appear only in the ablation; the main method remains $K=3$. Training reward includes all raw points and a trailing five-update mean. Evaluation panels show only recorded checkpoint means. The vertical line marks the main paper's update-20 checkpoint.}
\label{fig:dvoc_k_curves}
\end{figure}

Larger $K$ improves the later weighted evaluation reward in these runs, while $K=1$ retains a higher late-training AES score. ImageReward varies without a consistent ordering across all checkpoints. These histories complement the fixed update-20 comparison; per-update costs are reported in \Cref{tab:dvoc_training_time}.

\FloatBarrier
{\small
\setlength{\bibsep}{3pt plus 1pt minus 1pt}
\bibliographystyle{ieeenat_fullname}\bibliography{ref}}
\end{document}